\newcommand{\TechReportOnly}[1]{ \ifthenelse{\boolean{istechrpt}}{#1}{ }}
\newcommand{\algoecd}{{\sc EC$\phantom{}^2$}\xspace}
\newcommand{\algoname}{{\sc EffECXtive}\xspace} 
\newlength{\halfgnat}
\newlength{\gnat}
\newcommand{\ferr}{\btestsval_{\groundset}}
\newcommand{\ODT}{Optimal Decision Tree\xspace}
\newcommand{\ODTs}{ODT\xspace}
\newcommand{\ECD}{Equivalence Class Determination\xspace}
\newcommand{\ECDs}{ECD\xspace}
\newcommand{\fec}{f_{EC}}
\newcommand{\fgbs}{f_{GBS}}
\newcommand{\dec}{d}
\newcommand{\decset}{\mathcal{D}}
\newcommand{\loss}{\ell}
\newcommand{\vectorize}[1]{\ensuremath{\mathbf{#1}}}
\renewcommand{\H}[0]{{\cH}} %
\newcommand{\nclass}{m} %
\newcommand{\nhypo}{n} %
\newcommand{\ndata}{N} %
\newcommand{\ntests}{\ndata} %
\newcommand{\nout}{\ell}%
\newcommand{\pmin}{\ensuremath{p_{\min}}}
\newcommand{\cost}{\ensuremath{c}} %
\newcommand{\support}{\operatorname{supp}}
\newcommand{\payoff}{\ell}
\newcommand{\payoffs}{\mathcal{L}}
\newcommand{\pot}{\Phi}
\newcommand{\edges}{\ensuremath{\mathcal{E}}}
\newcommand{\w}{\ensuremath{w}}
\newcommand{\cut}[2]{\edges_{#1}\paren{#2}}
\newcommand{\htrue}[0]{\ensuremath{h^*}} %
\newcommand{\obs}[0]{\ensuremath{\btestval_{\groundset}}}%
\newcommand{\entropy}[1]{\mathbb{H}\paren{#1}}
\newcommand{\term}[0]{adaptive\xspace}  %
\newcommand{\policy}[0]{\ensuremath{\pi}}
\newcommand{\rlz}[0]{\ensuremath{h}} %
\newcommand{\outcome}[0]{\ensuremath{x}} %
\newcommand{\outcomes}[0]{\ensuremath{\mathcal{X}}} %
\newcommand{\quota}[0]{\ensuremath{Q}} %
\newcommand{\cavgdec}[1]{{\cost}{(#1)}} %
\newcommand{\diff}[3]{\ensuremath{\Delta_{\text{#1}}
    \hspace{-0.3mm}\paren{ {#3}\! \mid \! {#2} }  }}
\newcommand{\prior}[0]{\ensuremath{P}}
\newcommand{\vs}[0]{\ensuremath{\mathcal{V}}\xspace} %
\newcommand{\played}[2]{\groundset({#1}, {#2})}
\newcommand{\indicator}[1]{\ensuremath{\one[{#1}]}} 
\newcommand{\indicatorset}[1]{\ensuremath{\one{\left[{#1}\right] } }}
\newcommand{\daniel}[1]{\ifthenelse{\boolean{showcomments}}{\textcolor{red}{Daniel: #1}}{}}
\newcommand{\andreas}[1]{\ifthenelse{\boolean{showcomments}}{\textcolor{blue}{Andreas: #1}}{}}
\newcommand{\commentout}[1]{}
\newcommand{\ignore}[1]{}
\newlength{\presec}
\newlength{\postsec}
\newlength{\presubsec}
\newlength{\postsubsec}
\newlength{\prepara}
\newlength{\postpara}
\newcommand{\initOneLiners}{%
    \setlength{\itemsep}{0pt}
    \setlength{\parsep }{0pt}
    \setlength{\topsep }{0pt}
}
\newenvironment{proofof}[1]{
\noindent{\bf Proof of {#1}:}}
{\hfill$\blacksquare$
}
\newcommand{\paren} [1] {\ensuremath{ \left( {#1} \right) }}
\newcommand{\set}[1]{\ensuremath{\left\{#1\right\}}}
\newcommand{\prob}[1]{\ensuremath{\mathbb{P}\left[#1\right] }}
\newcommand{\expct}[1]{\mathbb{E}\left[#1\right]}
\newcommand{\expctover}[2]{\mathbb{E}_{#1}\!\left[#2\right]}
\def \argmax {\mathop{\rm arg\,max}}
\def \argmin {\mathop{\rm arg\,min}}
\newcommand{\dom}[0]{\operatorname{dom}}
\renewcommand{\implies}[0]{\ensuremath{\Rightarrow}}
\newcommand{\NonNegativeReals}{\ensuremath{\mathbb{R}_{\ge 0}}}
\newcommand{\nats}{\ensuremath{\mathbb{N}}}
\newcommand{\reals}{\ensuremath{\mathbb{R}}}
\newcommand{\rationals}{\ensuremath{\mathbb{Q}}}
\newcommand{\hypotheses}[0]{\mathcal{H}}
\newcommand{\hvar}[0]{\ensuremath{H}}
\newcommand{\hypothesis}[0]{\ensuremath{h}}
\newcommand{\tests}{\ensuremath{\mathcal{T}}}
\newcommand {\groundset}{\tests} %
\newcommand{\test}[0]{\ensuremath{t}}
\newcommand{\prlzvec}[1]{\btestval_{#1}}
\newcommand{\rlzvec}[0]{\btestval_{\data}}
\newcommand{\testvec}[1]{\ensuremath{\mathbf{X}_{#1}}}
\newcommand{\tested}[1]{\mathbf{X}_{#1}}
\newcommand{\seen}[1]{\mathbf{x}_{#1}}
\newcommand{\igpolicy}[0]{\policy_{\text{IG}}}
\newcommand{\voipolicy}[0]{\policy_{\text{VoI}}}
\newcommand{\badtassdelta}[0]{\Delta_{\text{Eff}}}
\newcommand{\testvar}[0]{\ensuremath{X}}
\newcommand{\btestvar}[0]{\mathbf{X}}
\newcommand{\btestsvar}[0]{\mathbf{X}}
\newcommand{\testval}[0]{\ensuremath{x}}
\newcommand{\btestval}[0]{\ensuremath{\mathbf{x}}}
\newcommand{\btestsval}[0]{\ensuremath{\mathbf{x}}}
\newcommand{\data}[0]{\tests}
\newcommand{\event}[0]{{\Lambda}}
\newcommand{\class} [1] {\textrm{#1}} %
\newcommand{\NP} {\class{NP}}
\newcommand{\one}{\mathbf{1}}
\newcommand{\cA}{{\mathcal{A}}}
\newcommand{\cV}{{\mathcal{V}}}
\newcommand{\cB}{{\mathcal{B}}}
\newcommand{\cH}{{\mathcal{H}}}
\newcommand{\cO}{{\mathcal{O}}}
\newtheorem{theorem}{Theorem}%
\newtheorem{lemma}[theorem]{Lemma}
\newtheorem{proposition}[theorem]{Proposition}
\theoremstyle{definition}
\theoremstyle{remark}
\numberwithin{equation}{section}
\newcommand{\figref}[1]{Fig.~\ref{#1}}
\newcommand{\eqnref}[1]{Eq.~(\ref{#1})}
\newcommand{\secref}[1]{\S\ref{#1}}
\newcommand{\thmref}[1]{Theorem~\ref{#1}}
\newcommand{\propref}[1]{Proposition~\ref{#1}}
\newcommand{\algref}[1]{Algorithm~\ref{#1}}
\begin{document}

\ifthenelse{\boolean{istechrpt}}{
   \title{Near--Optimal Bayesian Active Learning \\with Noisy Observations}
} { %
   \title{Near--Optimal Bayesian Active Learning \\with Noisy Observations}
}

\ifthenelse{\boolean{anonymous}}{ }{
   \author{
     Daniel Golovin\\
     Caltech%
     \And
     Andreas Krause \\                    
     Caltech%
     \And
     Debajyoti Ray \\                    
     Caltech%
   }
   }

\maketitle

\ifthenelse{\boolean{showcomments}}{
\noindent {\color{red} {\Large Draft: Do Not Distribute. (\today)}\\  
Comments are on.  To turn them off, toggle boolean
``showcomments''.}\\ 
}{ }

\begin{abstract}
\looseness -1 We tackle the fundamental problem of Bayesian active learning with noise, where we need to adaptively select from a number of expensive tests in order to identify an unknown hypothesis sampled from a known prior distribution. In the case of noise--free observations, a greedy algorithm called generalized binary search (GBS) is known to perform near--optimally.  We show that if the observations are noisy, perhaps surprisingly, GBS can perform very poorly. 
We develop \algoecd, a novel, greedy active learning algorithm and prove that it is competitive with the optimal policy, thus obtaining the first competitiveness guarantees for Bayesian active learning with noisy observations. Our bounds rely on a recently discovered diminishing returns property called adaptive submodularity, generalizing the classical notion of submodular set functions to adaptive policies.  Our results hold even if the tests have non--uniform cost and their noise is correlated.  
We also propose \algoname, a particularly fast approximation of
\algoecd, and evaluate it on a Bayesian experimental design problem involving human subjects, intended to tease apart competing economic theories of how people make decisions under uncertainty.
\end{abstract}

\ignore{
We tackle the
fundamental problem of Bayesian active learning with noise, where we
need to adaptively select from a number of expensive tests in order
to identify an unknown hypothesis sampled from a known prior
distribution. In the case of noise-free observations, a greedy
algorithm called generalized binary search (GBS) is known to perform
near-optimally. We show that if the observations are noisy, perhaps
surprisingly, GBS can perform very poorly. We develop EC2, a novel,
greedy active learning algorithm and prove that it is competitive
with the optimal policy, thus obtaining the first competitiveness
guarantees for Bayesian active learning with noisy observations. Our
bounds rely on a recently discovered diminishing returns property
called adaptive submodularity, generalizing the classical notion of
submodular set functions to adaptive policies. Our results hold even
if the tests have non–uniform cost and their noise is correlated.
We also propose EffECXtive, a particularly fast approximation of
EC2, and evaluate it on a Bayesian experimental design problem
involving human subjects, intended to tease apart competing economic
theories of how people make decisions under uncertainty.
} %

\vspace{\gnat}
\section{Introduction} \label{sec:intro}\vspace{\gnat}
\looseness -1  
How should we perform experiments to determine the most accurate scientific theory among competing candidates, or choose among expensive medical procedures to accurately determine a patient's condition, or select which labels to obtain in order to determine the hypothesis that minimizes generalization error?  
In all these
applications, we have to sequentially select among a set of noisy,
expensive observations (outcomes of experiments, medical tests, expert
labels) in order to determine which hypothesis (theory, diagnosis, classifier) is
most accurate. This fundamental problem has been studied in a number
of areas, including statistics \cite{Lindley:1956}, decision theory \cite{howard66voi}, machine learning \cite{nowak09,dasgupta04} %
and others.  One way to formalize such active
learning problems is  \emph{Bayesian experimental design} \cite{Chaloner1995}, where one
assumes a prior on the hypotheses, as well as probabilistic
assumptions on the outcomes of tests. The goal then is to determine
the correct hypothesis while minimizing the cost of the
experimentation. Unfortunately, finding this optimal policy is not
just NP-hard, but also hard to approximate \cite{chakaravarthy07decision}.  Several heuristic
approaches have been proposed that perform well in some applications,
but do not carry theoretical guarantees (e.g., \cite{MacKay1992}). In the case where
observations are \emph{noise-free}\footnote{This case is known as the
  \emph{Optimal Decision Tree} (ODT) problem.}, a simple algorithm, 
\emph{generalized binary search}\footnote{GBS greedily selects tests 
to maximize, in expectation over the test outcomes, the prior probability
mass of eliminated hypotheses (i.e., those with zero posterior
probability, computed w.r.t.~the observed test outcomes).}\emph{(GBS)}
run on a 
modified prior, is guaranteed to be competitive with the optimal policy; 
 the expected number of queries is a factor of $O(\log n)$ (where $n$ is the number of hypotheses) more than that of the optimal policy~\cite{kosaraju99}, which matches lower bounds up to constant factors~\cite{chakaravarthy07decision}.

The important case of \emph{noisy} observations, however, as present in most applications, is much less well understood. While there are some recent positive results in understanding the label complexity of noisy active learning \cite{nowak09,balcan06agnostic}, 
to our knowledge, so far there are no algorithms that are provably
competitive with the optimal sequential policy, except in very
restricted settings \cite{krause09}. In this paper, we introduce a
general formulation of Bayesian active learning with noisy
observations that we call the \emph{Equivalence Class Determination}
problem. We show that, perhaps surprisingly,  generalized binary
search performs poorly in this setting, as do greedily (myopically) maximizing the
information gain (measured w.r.t. the distribution on equivalence
classes) or the  decision-theoretic value
of information.
This motivates us to introduce a novel active learning criterion, and use it 
to develop a greedy active learning algorithm called the \textbf{E}quivalence
\textbf{C}lass \textbf{E}dge \textbf{C}utting algorithm (\algoecd),
whose expected cost is competitive to that of
the optimal policy. 
Our key insight is that our new objective function satisfies \emph{adaptive submodularity} \cite{golovin10adaptive}, a natural diminishing returns property that generalizes the classical notion of submodularity to adaptive policies. 
Our results also allow us to relax the common
assumption that the outcomes of the tests are conditionally
independent given the true hypothesis. We also develop the
\textbf{Eff}icient \textbf{E}dge \textbf{C}utting appro\textbf{X}imate
objec\textbf{tive} algorithm (\algoname), 
an efficient approximation to \algoecd, and evaluate it on a Bayesian
experimental design problem intended to tease apart competing theories
on how people make decisions under uncertainty, including Expected
Value \cite{vonneumann47}, Prospect Theory \cite{kahneman79},
Mean-Variance-Skewness \cite{hanoch70} and Constant Relative Risk
Aversion \cite{pratt64}. In our experiments, \algoname typically
outperforms existing experimental design criteria such as information
gain, uncertainty sampling, GBS, and decision-theoretic value of information.  Our results from human subject experiments further reveal that \algoname can be used as a real-time tool to classify people according to the economic theory that best describes their behaviour in financial decision-making, and reveal some interesting heterogeneity in the population.

\vspace{\gnat}
\section{Bayesian Active Learning in the Noiseless Case}\vspace{\gnat}
In the Bayesian active learning problem, we would like to distinguish
among a given set of hypotheses
$\hypotheses=\set{\hypothesis_{1},\dots,\hypothesis_{\nhypo}}$ by
performing tests from a set $\tests=\set{{1}, \dots, {\ntests}}$ of
possible tests. Running test $\test$ incurs a cost of $\cost(\test)$ and produces
an outcome from a finite set of
outcomes $\outcomes = \set{1, 2, \ldots, \nout}$.
We let $\hvar$ denote the random variable which equals the true
hypothesis, and model the outcome of each test $\test$ by a random variable
$\testvar_\test$ taking values in $\outcomes$.  
We denote the observed outcome of test $\test$ by $\testval_{\test}$.
We further suppose we have a prior distribution $\prior$ modeling our assumptions on the
joint probability $\prior(\hvar, \testvar_1, \ldots,
\testvar_{\ntests})$ over the hypotheses and  test outcomes.
In the noiseless case, we assume that the outcome
of each test is deterministic given the true hypothesis, i.e., for
each $\hypothesis \in \hypotheses$, 
$\prior(\testvar_1, \ldots, \testvar_{\ntests} \mid \hvar = \hypothesis)$
is a deterministic distribution.  Thus, each hypothesis
$\hypothesis$ is associated with a particular vector of test outcomes.
We assume, w.l.o.g.,
that no two hypotheses lead to the same outcomes for all tests. Thus,
if we perform all tests, we can uniquely determine the true
hypothesis. 
However in most applications we will wish to avoid performing every
possible test, as this is prohibitively expensive.  
Our goal is to find an adaptive policy for running tests that allows
us to determine the value of $\hvar$ while minimizing
the cost of the tests performed.  Formally, a policy $\policy$
(also called a conditional plan) is a partial mapping $\policy$
from partial observation vectors $\prlzvec{\cA}$ to tests, specifying which test to run next (or
whether we should stop testing) for any observation vector
$\prlzvec{\cA}$. Hereby, $\prlzvec{\cA} \in \outcomes^{\cA}$ is a vector of
outcomes indexed by a set of tests $\cA \subseteq \tests$ that we have
performed so far~\footnote{Formally we also require that $(\testval_{\test})_{\test \in \cB} \in \dom(\policy)$ and $\cA\subseteq \cB$,
  implies $(\testval_{\test})_{\test \in \cA} \in\dom(\policy)$ (c.f., \cite{golovin10adaptive}).}
(e.g., the set of labeled examples in active learning, or outcomes of a
set of medical tests that we ran).  After
having made observations $\prlzvec{\cA}$, we can rule out
inconsistent hypotheses. We denote the set of hypotheses consistent
with event $\event$
(often called the \emph{version space} associated with $\event$) by
$\vs(\event):=\set{\hypothesis\in\hypotheses: \prior(\hypothesis
\mid \event) > 0}$.  
We call a
policy \emph{feasible} if it is guaranteed to uniquely determine the
correct hypothesis. That is, upon termination with observation
$\prlzvec{\cA}$, it must hold that $|\vs(\prlzvec{\cA})|=1$. We can define the expected
cost of a policy $\policy$ by
\vspace{\halfgnat}
$$\cost(\policy):=\sum_{\hypothesis}\prior(\hypothesis) \cost(\played{\policy}{\hypothesis}) \vspace{\gnat}$$
where $\played{\policy}{\hypothesis}\subseteq\tests$ is the set of
tests run by policy $\policy$ in case $\hvar = \hypothesis$.
Our goal is to find a feasible policy $\policy^{*}$ of minimum
expected cost, i.e.,
\vspace{-1mm}
\begin{equation}
\policy^{*}=\argmin \set{\cost(\policy) : \policy \text{ is feasible}}
\label{eq:optdt}
\vspace{-1mm}
\end{equation}
A policy $\pi$ can be naturally represented as a decision tree $T^{\policy}$, and thus problem~\eqref{eq:optdt} is often called the \emph{Optimal Decision Tree} (\ODTs) problem.

Unfortunately,
obtaining an approximate policy $\policy$ for which
$\cost(\policy)\leq \cost(\policy^{*}) \cdot o(\log(n))$ is 
$\NP$-hard~\cite{chakaravarthy07decision}.
Hence, various heuristics are employed to solve the \ODT problem and
its variants.
Two of the most popular heuristics are to select tests greedily to
maximize the \emph{information gain} (IG) conditioned on previous test
outcomes, and \emph{generalized binary search} (GBS).
Both heuristics are greedy, and after having made observations 
$\prlzvec{\cA}$ will select 
$$\test^{*}=\argmax_{\test\in\tests} \diff{Alg}{\prlzvec{\cA}}{\test} /{\cost(\test)},$$
\looseness -1 where $\text{Alg} \in \set{\text{IG},\text{GBS}}$.  
Here,
$\diff{IG}{\prlzvec{\cA}}{\test} := \entropy{\tested{\groundset} \mid \seen{\cA}} -
\expctover{\testval_{\test} \sim \testvar_{\test} \mid \seen{\cA}}{\entropy{\tested{\groundset} |
    \btestsval_{\cA}, \testval_{\test}}}$ is the marginal information
gain measured with respect to the Shannon entropy
$\entropy{\btestsvar} := \expctover{\btestsval}{ - \log_2
  \prior(\btestsval)}$, and 
$\diff{GBS}{\prlzvec{\cA}}{\test} :=\prior(\vs(\prlzvec{\cA}))-\sum_{\outcome\in\outcomes}
\prior(\testvar_{\test} = \outcome\mid\prlzvec{\cA})
\prior(\vs(\prlzvec{\cA}, \testvar_\test =\outcome))$
is the expected
reduction in version space probability mass.
Thus, both heuristics greedily
chooses the test that maximizes the benefit-cost ratio, measured with
respect to their particular benefit functions.
They stop after running a set of tests $\cA$ such that
$|\vs(\prlzvec{\cA})|=1$, i.e., once the true hypothesis has been
uniquely determined.

It turns out that for the (noiseless) \ODT problem, these two
heuristics are equivalent ~\cite{zheng05}, as can be proved using the chain rule of
entropy.
\daniel{Give a proof in the appendix?  Include Zheng et al reference?}
Interestingly, despite its myopic nature GBS has been shown~\cite{kosaraju99,dasgupta04,guillory09,golovin10adaptive} to obtain
near-optimal expected cost: the strongest known bound is 
$\cost(\policy_{GBS})\le\cost(\policy^{*})\paren{\ln (1/\pmin)+1}$ 
where $\pmin :=\min_{\hypothesis\in\hypotheses} \prior(\hypothesis)$.
Let $\prlzvec{S}(\hypothesis)$ be the unique vector $\prlzvec{S} \in \outcomes^{S}$
such
that $\prior(\prlzvec{S} \mid \hypothesis) = 1$.
The result above is proved by exploiting the
fact that 
$\fgbs(S,\hypothesis):=1-\prior(\vs(\prlzvec{S}(\hypothesis)))
+\prior(\hypothesis)$
is \emph{adaptive
  submodular} and \emph{strongly adaptively
  monotone}~\cite{golovin10adaptive}. 
Call $\prlzvec{\cA}$ a \emph{subvector} of $\prlzvec{\cB}$ if $\cA
\subseteq \cB$ and $\prior(\prlzvec{\cB} \mid \prlzvec{\cA}) > 0$.  
In this case we write $\prlzvec{\cA} \prec \prlzvec{\cB}$.
A function $f:2^{\data}\times \hypotheses$ is called adaptive submodular w.r.t. a distribution $\prior$, if for any  $\prlzvec{\cA}\prec\prlzvec{\cB}$ and any test $\test$ it holds that 
$\diff{}{\prlzvec{\cA}}{\test} \geq \diff{}{\prlzvec{\cB}}{\test}$, where 
$$\diff{}{\prlzvec{\cA}}{\test} :=\expctover{\hvar}{f(\cA \cup \set{\test}, \hvar) - f(\cA, \hvar)\ \mid \prlzvec{\cA}}.$$
Thus, $f$ is adaptive submodular if the expected marginal benefits
$\diff{}{\prlzvec{\cA}}{\test}$ of adding a new test $\test$ can only
decrease as we gather more observations. $f$ is called \emph{strongly
  adaptively monotone} w.r.t.~$\prior$ if, informally, ``observations never hurt'' with respect to the expected reward. Formally, for
all $\cA$, all $\test \notin \cA$, and all
$\outcome \in \outcomes$ 
we require 
$\expctover{\hvar}{f(\cA, \hvar) \ \mid \  \prlzvec{\cA}} \le 
\expctover{\hvar}{f(\cA \cup \set{\test}, \hvar) \ \mid \ 
  \prlzvec{\cA}, \testvar_{\test} = \outcome}\mbox{.} $ %

The performance guarantee for GBS follows from the following general result about the greedy algorithm for adaptive submodular functions (applied with $Q=1$ and $\eta=\pmin$):
\begin{theorem}[Theorem $10$ of \cite{golovin10adaptive} with
  $\alpha
  = 1$] \label{thm:min-set-cover-avg-generalized-with-costs}
Suppose $f:2^{\data} \times \hypotheses \to
\NonNegativeReals$ is \term submodular and strongly adaptively
monotone with respect to $\prior$
and there exists $Q$ such that 
$f(\groundset, \rlz) = Q$ for all $\rlz$.
Let $\eta$ be any value such that 
$f(S, \rlz) > Q - \eta$ implies $f(S, \rlz) = Q$ for all sets  $S$
and hypotheses $\rlz$.
Then for self--certifying instances the adaptive greedy policy
$\policy$ satisfies 
$ \cost({\policy}) \le  \,
\cost({\policy^*})\paren{\ln \paren{\frac{Q}{\eta}} + 1} 
\mbox{.} $
\end{theorem}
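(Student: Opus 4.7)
The plan is to lift the classical greedy analysis for (non-adaptive) submodular set cover to the adaptive setting by tracking a suitable expected potential function. Define, for any partial observation $\prlzvec{\cA}$ reached by greedy,
\[
\pot(\prlzvec{\cA}) := Q - \expctover{\hvar}{f(\cA, \hvar) \mid \prlzvec{\cA}}.
\]
Since $f(\tests, \rlz) = Q$ for all $\rlz$, we have $\pot(\emptyset) \le Q$; and strong adaptive monotonicity guarantees that $\pot$ can only decrease in expectation as greedy runs. The strategy is to show that $\pot$ decays geometrically in the expected cost incurred, and then invoke the $\eta$-gap hypothesis to argue termination.

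The crux is a one-step improvement lemma: for every reachable $\prlzvec{\cA}$ with $\pot(\prlzvec{\cA}) > 0$, there exists a test $\test$ with
\[
\frac{\diff{}{\prlzvec{\cA}}{\test}}{\cost(\test)} \ \ge\ \frac{\pot(\prlzvec{\cA})}{\cost(\policy^*)}.
\]
To prove this, consider the ``hybrid'' policy that, starting from $\prlzvec{\cA}$, executes $\policy^*$. By feasibility of $\policy^*$ on self--certifying instances and strong adaptive monotonicity, this hybrid achieves $\expctover{\hvar}{f(\cA \cup \played{\policy^*}{\hvar}, \hvar) \mid \prlzvec{\cA}} = Q$, so the expected sum of marginal gains of its tests equals at least $\pot(\prlzvec{\cA})$. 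Adaptive submodularity upper bounds each conditional marginal by the marginal computed at $\prlzvec{\cA}$, and an averaging argument (weighting by each test's probability of being played times $\cost(\test)$, and dividing by the total expected cost $\le \cost(\policy^*)$) extracts a single test achieving the required bang-per-buck. Greedy, being the bang-per-buck maximizer, does at least as well.

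Integrating this differential inequality along any realization and taking expectation shows that after the greedy policy has incurred expected cost $c$, the expected potential satisfies $\expct{\pot} \le Q \exp(-c / \cost(\policy^*))$. Setting $c = \cost(\policy^*)\ln(Q/\eta)$ drives the expected potential below $\eta$. At this point, the $\eta$-gap hypothesis kicks in: any realization with $f(S, \rlz) > Q - \eta$ already satisfies $f(S, \rlz) = Q$, so the remaining ``uncovered'' realizations cost at most one more ``epoch'' of $\cost(\policy^*)$ to handle (formally, one more round of the same argument since $\pot \le \eta$ implies by Markov that the probability-weighted remaining coverage is finished at expected cost $\cost(\policy^*)$). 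Combining yields $\cost(\policy) \le \cost(\policy^*)(\ln(Q/\eta) + 1)$.

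The main obstacle is the one-step improvement lemma: one must carefully couple the greedy trajectory with $\policy^*$ executed from an arbitrary state $\prlzvec{\cA}$, and use adaptive submodularity to pass from the \emph{conditional} marginals along $\policy^*$'s random path to marginals evaluated at $\prlzvec{\cA}$, without losing the non--uniform cost weighting. A secondary subtlety is the continuous--cost (rather than unit--cost) accounting of the exponential decay argument, which requires either a per--test telescoping of $\ln$ ratios or an explicit truncation argument to convert the discrete greedy recursion into the clean bound $(1 - \cost(\test)/\cost(\policy^*))$ per step.
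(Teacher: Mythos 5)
The paper does not actually prove this statement---it imports it verbatim from Golovin and Krause \cite{golovin10adaptive}---so your proposal can only be judged against the known proof of that result. Your overall architecture (Wolsey-style potential $\pot(\prlzvec{\cA}) = Q - \expctover{\hvar}{f(\cA,\hvar)\mid\prlzvec{\cA}}$, a one-step bang-per-buck lemma, exponential decay, and an $\eta$-truncation) is the natural one, but it contains a genuine gap at exactly the point where the real proof has to work hardest.

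The one-step improvement lemma is false as you state it. The hybrid argument---run $\policy^*$ from $\prlzvec{\cA}$ and bound its conditional marginals by marginals at $\prlzvec{\cA}$ via adaptive submodularity---only yields
$\max_{\test} \diff{}{\prlzvec{\cA}}{\test}/\cost(\test) \ \ge\ \pot(\prlzvec{\cA}) / \expctover{\hvar}{\cost(\played{\policy^*}{\hvar}) \mid \prlzvec{\cA}}$,
with the \emph{conditional} expected cost of $\policy^*$ in the denominator, and this can be arbitrarily larger than the unconditional $\cost(\policy^*)$ appearing in your claim. Concretely: take two equiprobable hypotheses, $Q=1$, a unit-cost test $\test_1$ that distinguishes them and gives reward $1$ under $h_1$ but only $1/2$ under $h_2$, and a cost-$M$ test $\test_2$ that completes coverage. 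Then $\cost(\policy^*) = 1 + M/2$, yet at the reachable state $\prlzvec{\cA} = (\testvar_{\test_1}=b)$ we have $\pot = 1/2$ and the best available ratio is $1/(2M)$, which violates your inequality for every $M>2$ (the theorem's conclusion still holds there, since greedy mimics $\policy^*$; only your lemma breaks). Because the denominator now varies across branches, the subsequent integration to $\expct{\pot} \le Q\exp(-\cost/\cost(\policy^*))$ does not go through---Jensen points the wrong way---and your final ``one more epoch'' step inherits the same conditional-versus-unconditional defect. Patching the lemma with $\wcst{\policy^*} := \max_{\hypothesis}\cost(\played{\policy^*}{\hypothesis})$ in the denominator does make your argument sound, but it proves only the weaker worst-case-cost comparison, which is a \emph{different} theorem in \cite{golovin10adaptive}. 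The average-versus-average bound quoted here requires a more global accounting over the branches of the greedy tree, and it is precisely there that the self-certifying hypothesis---which your sketch invokes only nominally---is used in an essential way.
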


The technical requirement that instances be \emph{self--certifying}
means that the policy will have proof that it has obtained the maximum
possible objective value, $Q$, immediately upon doing so.
 It is not
difficult to show that this is the case with the instances we consider
in this paper.
We refer the interested reader to~\cite{golovin10adaptive} for more detail.

In the following sections, we will use the concept of adaptive
submodularity to provide the first approximation guarantees for
Bayesian active learning with noisy observations.

\vspace{\gnat} \section{The \ECD Problem and the \algoecd Algorithm}\vspace{\gnat}
\label{sec:equiv-class-determination}

\looseness -1 We now wish to consider the Bayesian active learning problem where tests can have noisy outcomes. 
Our general strategy is to reduce the problem of noisy observations to the noiseless setting. To gain intuition, consider a simple model where tests have binary outcomes, and we know that the outcome of exactly one test, chosen uniformly at random unbeknown to us, is flipped. If any pair of hypotheses $\hypothesis\neq\hypothesis'$ differs by the outcome of at least three tests, we can still uniquely determine the correct hypothesis after running all tests. In this case we can reduce the noisy active learning problem to the noiseless setting by, for each hypothesis, creating $\ndata$ ``noisy'' copies, each obtained by flipping the outcome of one of the $\ndata$ tests. The modified prior $\prior'$ would then assign mass $\prior'(\hypothesis')=\prior(\hypothesis)/\ndata$ to each noisy copy $\hypothesis'$ of $\hypothesis$. The conditional distribution $\prior'(\testvec{\data}\mid\hypothesis')$ is still deterministic (obtained by flipping the outcome of one of the tests).
 Thus, each hypothesis $\hypothesis_{i}$ in the original problem is now associated with a set $\H_{i}$ of hypotheses in the modified problem instance.
However, instead of selecting tests to determine which noisy copy has been realized, we only care which set $\H_{i}$ is realized. 

\paragraph{The \ECD problem (\ECDs).}  \looseness -1 More generally, we introduce the \emph{\ECD
  problem}\footnote{Bellala et
  al. simultaneously studied ECD~\cite{bellala10extensions},
  and, like us, used it to model active learning with noise~\cite{bellala09}.  
  They  developed  an extension of GBS for ECD.
  We defer a detailed comparison of our approaches to future work.},
where our set of hypotheses $\hypotheses$ is partitioned
into a set of $\nclass$
equivalence classes $\set{\H_1, \ldots, \H_\nclass}$ so that $\hypotheses = \biguplus_{i=1}^{\nclass} \H_i $, and the goal is to determine which class $\H_i$ the true hypothesis lies in.
Formally, upon termination with observations $\prlzvec{\cA}$ we require
that $\vs(\prlzvec{\cA})\subseteq \H_{i}$ for some $i$.
As with the \ODTs problem, the goal is to minimize the expected cost
of the tests, where the expectation is taken over the true
hypothesis sampled from $\prior$.  In~\secref{sec:noise}, we will show
how the \ECD problem arises naturally from Bayesian experimental
design problems in probabilistic models.

Given the fact that GBS performs near-optimally on the \ODT problem, a natural approach to solving \ECDs would be to run GBS until the termination condition is met. Unfortunately, and perhaps surprisingly, GBS can perform very poorly on the \ECDs problem.
Consider an instance with a uniform prior over $\nhypo$ hypotheses, $\hypothesis_1,
\ldots, \hypothesis_\nhypo$, and two equivalence classes $\H_1 := \set{h_i : 1 \le i <
  n}$ and $\H_2 := \set{h_n}$.  There are tests $\tests = \set{1,
  \ldots, n}$ such that $h_i(\test) = \indicatorset{i = \test}$, all of
unit cost. Hereby, $\indicatorset{\event}$ is the indicator 
variable for event $\event$.
In this case, the optimal policy  only needs to select test $n$,
however GBS  may 
select tests $1, 2, \ldots, n$ in order until running test $\test$, where 
$\hvar = h_\test$ is the true hypothesis.  Given our uniform prior, it takes 
$n/2$ tests in expectation until this happens, so that GBS pays, in expectation, $n/2$ times the optimal
expected cost in this instance.

The poor performance of GBS in this instance may be attributed to its
lack of consideration for the equivalence classes.  Another natural heuristic
would be to run the greedy information gain policy, only
with the entropy measured with respect to the probability distribution
on \emph{equivalence classes} rather than hypotheses.  Call this policy
$\igpolicy$.  It is clearly aware of the equivalence classes, as it 
adaptively and myopically selects tests to reduce the uncertainty of
the realized class, measured w.r.t. the Shannon entropy.  However, we can show there are
instances in which it pays $\Omega(n/\log(n))$ times the
optimal cost, even under a uniform prior.  
\ifthenelse{\boolean{istechrpt}}{
Refer to Appendix~\ref{sec:infogain-stinks} for details.
}{ 
See the long version of this paper~\cite{dtree-arxiv-version} for details.
}

\daniel{Mention negative result for Bayesian decision-theoretic value
  of information criteria.  Perhaps expand out to a theorem statement here?}

\paragraph{The \algoecd algorithm.} 
\looseness -1 The reason why 
GBS fails is because reducing the version space mass
does not necessarily facilitate differentiation among the classes
$\H_{i}$.  The reason why $\igpolicy$ fails is that there are
complementarities among tests; a set of tests can be far better than
the sum of its parts.
Thus, we would like to optimize an objective function that 
encourages differentiation among classes, but lacks complementarities.
We adopt a very elegant idea from Dasgupta~\cite{dasgupta05}, 
and define weighted edges between
hypotheses that we aim to distinguish between.  However, instead of
introducing edges between arbitrary pairs of hypotheses (as done in
\cite{dasgupta05}), we only introduce edges between hypotheses in
different classes.  Tests will allow us to cut 
edges inconsistent with their outcomes, and we aim to eliminate all inconsistent edges while minimizing
the expected cost incurred.  We now formalize this intuition.

Specifically, we define a set of edges 
$\edges = \cup_{1 \le i < j \le \nclass} \set{\set{h, h'} : h \in \H_i, h' \in \H_j }$,
consisting of all (unordered) pairs of hypotheses belonging to
distinct classes.
These are the edges that must be \emph{cut}, by which we mean for any 
edge $\set{h, h'} \in \edges$, at least one hypothesis in $\set{h,
  h'}$ must be ruled out (i.e., eliminated from the version space).
Hence, a test $\test$ run under true hypothesis $\rlz$
is said to cut edges $\cut{\test}{\rlz} := \set{\set{h', h''} : h'(\test) \neq h(\test) \text{ or } h''(\test) \neq h(\test) }$. See \figref{fig:edge-cutting} for an illustration.
We define a weight function $\w:\edges \to \NonNegativeReals$ by 
$\w(\set{h, h'}) := \prior(h) \cdot \prior(h')$. 
We extend the weight function to an additive (modular) function on sets of edges
in the natural manner, i.e.,  $\w(\edges') := \sum_{e \in \edges'} \w(e)$.
The objective $\fec$ that we will greedily maximize is then defined as
the weight of the edges cut (EC):
\begin{equation}
  \label{eq:class1}
  \fec(\cA, \rlz) := \w\Bigl(\bigcup_{\test \in \cA} \cut{\test}{\rlz}
  \Bigr)
\vspace{\gnat}
\vspace{\gnat}
\end{equation}

The key insight that allows us to prove approximation guarantees for
$\fec$ is that $\fec$ shares the same beneficial properties that make
$\fgbs$ amenable to efficient greedy optimization.
\ifthenelse{\boolean{istechrpt}}{
We prove this fact, as stated in \propref{prop:ecprop}, in Appendix~\ref{sec:proofs}.
}{ 
The proof of this fact, as stated in \propref{prop:ecprop}, can be found in the long version of this paper~\cite{dtree-arxiv-version}.
}

\begin{proposition}\label{prop:ecprop}
The objective $\fec$ is strongly adaptively monotone and adaptively submodular. 
\end{proposition}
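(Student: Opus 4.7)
The plan is to exploit a compact description of $\fec$ in terms of the version space, from which both properties will follow with relatively little work. Observe that an edge $\set{h, h'} \in \edges$ lies in $\bigcup_{\test \in \cA} \cut{\test}{\rlz}$ precisely when at least one of $h, h'$ disagrees with $\rlz$ on some test in $\cA$, i.e., when $\set{h, h'} \not\subseteq \vs(\prlzvec{\cA}(\rlz))$. Letting $\edges(V) := \set{e \in \edges : e \subseteq V}$, this yields
\begin{equation*}
\fec(\cA, \rlz) \;=\; \w(\edges) - \w(\edges(\vs(\prlzvec{\cA}(\rlz)))) .
\end{equation*}
In particular, $\fec(\cA, \rlz)$ depends on $\rlz$ only through the outcome vector $\prlzvec{\cA}(\rlz)$, so the conditional expectation collapses to $\expctover{\hvar}{\fec(\cA, \hvar) \mid \prlzvec{\cA}} = \w(\edges) - \w(\edges(\vs(\prlzvec{\cA})))$.

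From this representation, strong adaptive monotonicity is immediate: since $\vs(\prlzvec{\cA}, \testvar_\test = \outcome) \subseteq \vs(\prlzvec{\cA})$, we have $\w(\edges(\vs(\prlzvec{\cA}, \testvar_\test = \outcome))) \le \w(\edges(\vs(\prlzvec{\cA})))$, so the conditional expected value of $\fec$ can only increase when $\cA$ is extended by a newly observed test.

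For adaptive submodularity, fix $\prlzvec{\cA} \prec \prlzvec{\cB}$ and a test $\test$. By a standard chaining argument it suffices to handle the case where $\prlzvec{\cB}$ extends $\prlzvec{\cA}$ by a single additional observation $\testvar_{\test'} = \outcome'$ on some $\test' \notin \cA$. Writing $V := \vs(\prlzvec{\cA})$ and $V_\outcome := \set{g \in V : g(\test) = \outcome}$, the expected marginal gain decomposes edge-by-edge as
\begin{equation*}
\diff{}{\prlzvec{\cA}}{\test} \;=\; \sum_{e \in \edges(V)} \w(e) \cdot q_\test(e, V),
\end{equation*}
where $q_\test(\set{h, h'}, V) = 1$ if $h(\test) \neq h'(\test)$ and $q_\test(\set{h, h'}, V) = 1 - \prior(V_{h(\test)})/\prior(V)$ otherwise (the probability that at least one endpoint of $e$ is eliminated by the outcome of $\test$). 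The task then reduces to showing this sum does not increase when $V$ is replaced by $V' := \vs(\prlzvec{\cB}) \subseteq V$. Edges in $\edges(V) \setminus \edges(V')$ simply drop out of the new sum, which works in our favor; the delicate part is bounding the change on edges that remain in $\edges(V')$, whose $q_\test$ factors may in principle shift in either direction.

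The main obstacle is exactly this last accounting step. Using the product form $\w(\set{h, h'}) = \prior(h)\prior(h')$, I would partition the edges in $\edges(V)$ by the pair of equivalence classes $(\H_i, \H_j)$ containing their endpoints and by their joint outcomes on $(\test, \test')$. Each $(\H_i, \H_j)$ contribution becomes a bilinear form in the masses $\prior(\H_i \cap V_\outcome \cap \set{g : g(\test') = \outcome''})$, and the claimed monotonicity reduces to an elementary inequality per class pair that plays the eliminated edges off against the shift in $q_\test$ on the surviving ones. Summing the non-negative contributions over $(i, j)$ then yields $\diff{}{\prlzvec{\cA}}{\test} \ge \diff{}{\prlzvec{\cB}}{\test}$, completing the proof.
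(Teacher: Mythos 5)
Your monotonicity argument is correct and matches the paper's: $\fec(\cA,\rlz)$ depends on $\rlz$ only through $\prlzvec{\cA}$, and the weight of cut edges can only grow as observations accumulate. Your setup for adaptive submodularity is also sound --- the reduction to single-step extensions and the per-edge decomposition $\diff{}{\prlzvec{\cA}}{\test} = \sum_{e \in \edges(V)} \w(e)\, q_\test(e,V)$ are both correct, and you have correctly located the difficulty in the edges that survive the shrinking of the version space. But the step you propose for closing that difficulty does not work. The contribution of a single class pair $(\H_i,\H_j)$ to the difference $\diff{}{\prlzvec{\cA}}{\test} - \diff{}{\prlzvec{\cB}}{\test}$ is \emph{not} in general nonnegative, so there is no ``elementary inequality per class pair'' whose nonnegative terms you can sum. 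Concretely, take $\H_1 = \set{h_1}$, $\H_2 = \set{h_2}$, $\H_3 = \set{h_3, h_4}$ with uniform prior $1/4$ on each hypothesis, and a test $\test$ with $h_1(\test)=h_2(\test)=h_3(\test)=0$ and $h_4(\test)=1$. Then $q_\test(\set{h_1,h_2},V) = 1 - 3/4 = 1/4$. If the intervening observation eliminates exactly $h_3$ (say $\test'$ with $h_3(\test')=1$ and $h_1(\test')=h_2(\test')=h_4(\test')=0$, outcome $0$ observed), then on $V'=\set{h_1,h_2,h_4}$ we get $q_\test(\set{h_1,h_2},V') = 1 - (1/2)/(3/4) = 1/3 > 1/4$, while no edge between $\H_1$ and $\H_2$ has been eliminated; the $(\H_1,\H_2)$ contribution strictly increases, from $(1/16)(1/4)$ to $(1/16)(1/3)$. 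The total marginal gain does decrease ($11/64$ versus $7/48$), but only because the eliminated edges $\set{h_1,h_3}$ and $\set{h_2,h_3}$, which belong to \emph{other} class pairs, absorb the increase. The compensation is inherently global across class pairs, so your final summation step fails.

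This is exactly the point at which the paper's proof has to work hard. It first reduces an arbitrary rational prior to a uniform one by duplicating hypotheses, then writes the marginal gain as a function $\theta$ of the counts $n_{i,a}$ (hypotheses of class $i$ in the version space with outcome $a$ at $\test$) together with derived edge counts $e_a$, and establishes $\diff{}{\prlzvec{\cA}}{\test} \ge \diff{}{\prlzvec{\cB}}{\test}$ by integrating $\theta$ along a coordinatewise-monotone path and checking that every partial derivative of $\theta$ is nonnegative. The decisive inequality, $|\vs(\prlzvec{\cA})|\, e_c \le |\vs(\prlzvec{\cA})|^2 \sum_{j \neq k,\, b \neq c} n_{j,b} + \sum_b e_b n_b$, is proved by bounding $e_c$ against the \emph{total} version-space size $\sum_{i,a} n_{i,a}$ --- precisely the kind of cross-class accounting your per-pair scheme forgoes. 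To repair your argument you would need an analogous global bookkeeping step; as written, the crucial inequality is asserted rather than proved, and in the per-class-pair form you state it, it is false.
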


Based on the objective $\fec$, we can calculate the marginal benefits for test $\test$ upon observations $\prlzvec{\cA}$ as
$$\diff{EC}{\prlzvec{\cA}}{\test} :=\expctover{\hvar}{\fec(\cA \cup \set{\test}, \hvar) - \fec(\cA, \hvar)\ \mid \prlzvec{\cA}}.$$
We call the adaptive policy $\policy_{EC}$ that, after observing
$\prlzvec{\cA}$, greedily selects test \linebreak
$\test^{*}\in\argmax_{\test} \diff{EC}{\prlzvec{\cA}}{\test}/\cost(\test)$, the \algoecd algorithm (for \emph{equivalence class edge cutting}).

Note that these instances are self--certifying, because we obtain
maximum objective value 
if and only if the version space lies within an equivalence class, and the
policy can certify this condition when it holds.
So we can apply
\thmref{thm:min-set-cover-avg-generalized-with-costs} to show \algoecd obtains a 
$\ln(\quota /\eta)+1$ approximation to \ECD. Hereby,  $\quota =
\w(\edges) = 1 - \sum_{i} (\prior(\rlz \in \H_i))^2 \le 1$ is the total weight of all edges that need to be cut, and $\eta = \min_{e \in \edges} \w(e) \ge \pmin^2$ is a bound on the minimum weight among all edges. We have the following result:
\begin{figure}[t]
\centering \subfigure[\emph{The \ECD problem}]{
\includegraphics[width=0.55\textwidth]{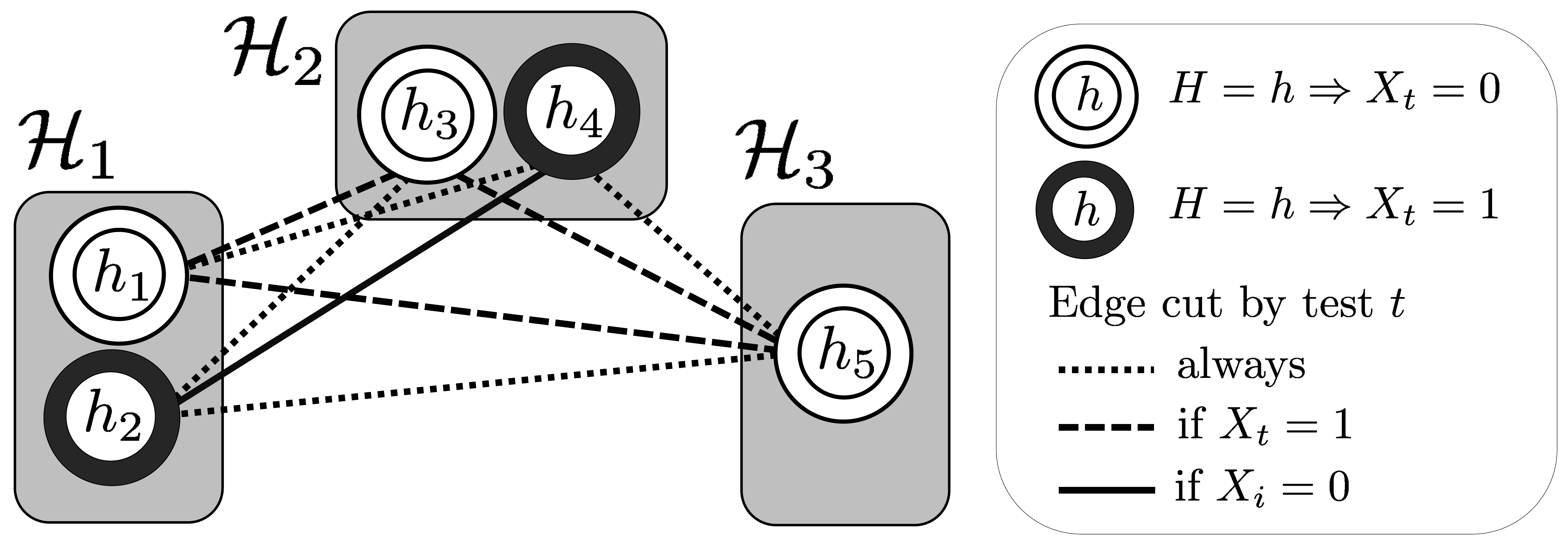}
 \label{fig:edge-cutting}
 }
\subfigure[\emph{Error model}]{
\includegraphics[width=0.36\textwidth]{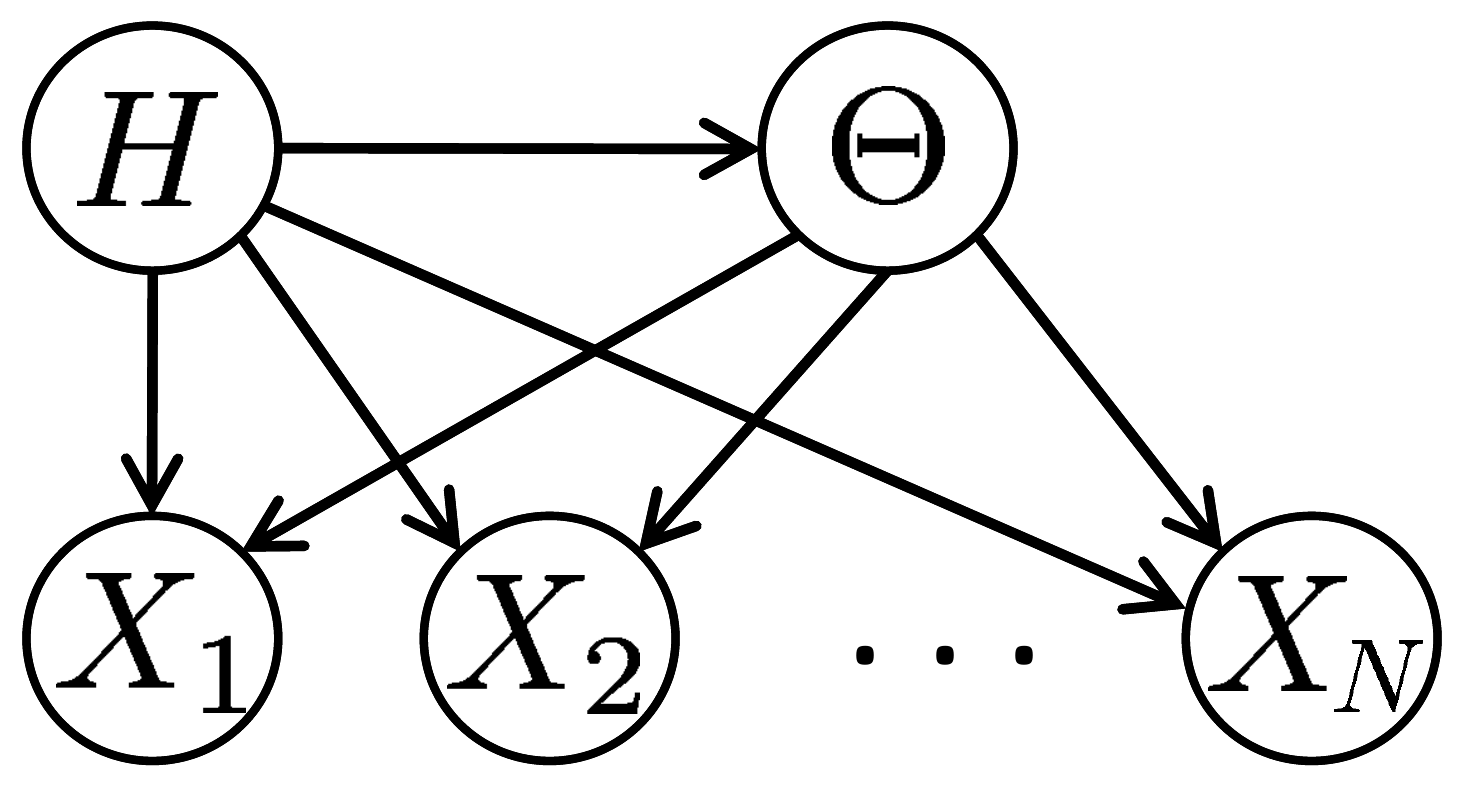}
 \label{fig:error-model}
 }
 \vspace{-3mm}
 \caption{\footnotesize (a) An instance of \ECD with binary test
          outcomes, shown with the set
          of edges that must be cut, and depicting the effects of
          test $i$ under different outcomes. (b) The graphical model underlying our error model.
 }\vspace{-3mm}
\end{figure}

 \begin{theorem} \label{thm:equiv-class-determination}
Suppose $\prior(h)$ is rational for all $h \in \hypotheses$.
For the  adaptive greedy policy $\policy_{EC}$ implemented by \algoecd it holds that
$$\cost(\policy_{EC})\leq (2\ln(1/\pmin)+1)\cost(\policy^{*}),$$
where $\pmin := \min_{h \in \hypotheses} \prior(h)$ is the minimum prior probability of any hypothesis, and $\policy^{*}$ is the optimal policy for the \ECD problem.
 \end{theorem}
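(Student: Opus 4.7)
The plan is to invoke Theorem~\ref{thm:min-set-cover-avg-generalized-with-costs} directly with $f = \fec$, relying on Proposition~\ref{prop:ecprop} to supply adaptive submodularity and strong adaptive monotonicity. What remains is to identify suitable values of the parameters $Q$ and $\eta$, verify the self-certifying condition, and finish with one line of arithmetic.

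First I would set $Q := \w(\edges)$ and check that $\fec(\groundset, \rlz) = Q$ for every $\rlz$. Because distinct hypotheses disagree on at least one test, after running all tests in $\groundset$ every $h \neq \rlz$ is distinguished from $\rlz$ by some test outcome. Any edge $\set{h,h'} \in \edges$ has its endpoints in two distinct equivalence classes, so at most one endpoint can equal $\rlz$; hence at least one endpoint differs from $\rlz$ on some test, meaning the edge is cut. A short calculation bounds $Q = \sum_{i<j} \prior(\H_i)\prior(\H_j) \le 1$. Next I would argue that $\eta := \pmin^2$ satisfies the condition of the theorem. Every edge weight obeys $\w(\set{h,h'}) = \prior(h)\prior(h') \ge \pmin^2$, so if $\fec(S,\rlz) > Q - \pmin^2$ then the total weight of uncut edges is strictly less than $\pmin^2$, forcing the uncut set to be empty and thus $\fec(S,\rlz) = Q$. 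The instance is self-certifying because $\fec$ attains $Q$ exactly when $\vs(\prlzvec{\cA}) \subseteq \H_i$ for some $i$, a condition the policy can verify directly from the posterior over hypotheses.

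Plugging these quantities into Theorem~\ref{thm:min-set-cover-avg-generalized-with-costs} yields
$$\cost(\policy_{EC}) \le \cost(\policy^{*})\paren{\ln(Q/\eta) + 1} \le \cost(\policy^{*})\paren{\ln(1/\pmin^{2}) + 1} = \cost(\policy^{*})\paren{2\ln(1/\pmin) + 1},$$
which is the claimed bound. The bulk of the technical effort has already been absorbed into Proposition~\ref{prop:ecprop}; the only genuinely delicate step here is the $\eta$ argument, and this is where the rationality assumption on $\prior$ enters, ensuring that the strict inequality $\fec(S,\rlz) > Q - \pmin^2$ can be sharpened cleanly to $\fec(S,\rlz) = Q$ without pathological limiting behaviour. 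I do not anticipate any other obstacle beyond these routine verifications.
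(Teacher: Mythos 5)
Your proposal is correct and is essentially the paper's own argument: apply Theorem~\ref{thm:min-set-cover-avg-generalized-with-costs} to $\fec$ via Proposition~\ref{prop:ecprop}, with $Q=\w(\edges)\le 1$, $\eta=\pmin^2$, and the self-certifying observation, then compute $\ln(Q/\eta)+1\le 2\ln(1/\pmin)+1$. One small correction: the rationality of $\prior$ is not what makes the $\eta$ argument work (that step is valid for arbitrary priors, since any nonempty set of uncut edges has weight at least $\pmin^2$); it is consumed inside Proposition~\ref{prop:ecprop}, where the paper's proof of adaptive submodularity reduces a general rational prior to a uniform one by duplicating hypotheses.
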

In the case of unit cost tests, we can apply a technique of Kosaraju
et al.~\cite{kosaraju99}, originally developed for the GBS algorithm,
to improve the approximation guarantee to $\cO(\log \nhypo)$ by
applying \algoecd with a modified prior distribution.  We defer details
to the full version of this paper.

\TechReportOnly{
\paragraph{A Fast Implementation of \algoecd.}
The time running time of \algoecd is
dominated by the evaluations of $\diff{EC}{\prlzvec{\cA}}{\test}$.
The naive way to compute $\diff{EC}{\prlzvec{\cA}}{\test}$
is to construct a graph on the $\nhypo$ hypotheses with weighted
edges as prescribed by \algoecd, and then see which edges are cut by
$\test$ for each potential test outcome.  Assuming there are $\nout$ possible outcomes of a test, and
that we can evaluate $\hypothesis(\test)$ in unit time
for all $\hypothesis$ and $\test$, this will take $\cO(\nhypo^2
\nout)$ time.  With $\ntests$ tests, the total time per round of \algoecd 
is $\cO(\ntests \nhypo^2 \nout)$.
However, there is a much faster way to compute
$\diff{EC}{\prlzvec{\cA}}{\test}$.
Note that $\diff{EC}{\prlzvec{\cA}}{\test}$ equals
\[
\expctover{\testval_{\test}  \sim
  \testvar_{\test} \mid \seen{\cA}}{ 
\frac{1}{2}\sum_{i \neq j} \Big( 
\prior\paren{\H_i \cap
  \vs(\prlzvec{\cA})}\prior\paren{\H_j \cap
  \vs(\prlzvec{\cA})} - 
\prior\paren{\H_i \cap
  \vs(\prlzvec{\cA}, \testval_{\test})} \prior\paren{\H_j \cap
  \vs(\prlzvec{\cA}, \testval_{\test})} \Big)}.
\]
Now, 
compute $\alpha(i,\testval_{\test}) := \prior\paren{\H_i \cap
  \vs(\prlzvec{\cA}, \testval_{\test})}$ for each $i$ and $\testval$,
then compute $\beta(i) := \prior\paren{\H_i} = \sum_{\testval}
\alpha(i,\testval)$.
Next, compute $\gamma(\testval_{\test}) := \prior\paren{ \testval_{\test}  \mid \prlzvec{\cA}} =
\sum_{i} \alpha(i, \testval_{\test} ) / \sum_{i} \beta(i)$.
All of these terms can be computed in total time $\cO(\nhypo)$
by iterating over the hypotheses and for each $\hypothesis$ adding
$\prior(\hypothesis)$ to the appropriate terms (i.e., $\beta(i)$,
$\alpha(i,\testval)$, and $\gamma(\testval)$ if $\hypothesis
\in \H_i$ and $\hypothesis(t) = \testval$). 
Using these variables, we can rewrite 
$\diff{EC}{\prlzvec{\cA}}{\test}$ as 
$\expctover{\testval_{\test}  \sim
   \testvar_{\test} \mid \seen{\cA}}{ 
 \frac{1}{2}\sum_{i \neq j} \Big( \beta(i) \beta(j) - \alpha(i,
   \testval_{\test} ) \alpha(j,
   \testval_{\test} ) \Big)}$.
Note that for any $\eta_1, \eta_2, \ldots, \eta_\nclass
\in \reals$, we have 
$\sum_{i \neq j} \eta_i \eta_j = \paren{\sum_{i} \eta_i }^2 -
\sum_{i} \eta_i^2$.
Using this equality, we can evaluate sums like 
$\sum_{i \neq j} \Big( \beta(i) \beta(j) - \alpha(i,
  \testval_{\test} ) \alpha(j,\testval_{\test} ) \Big)$ in 
$\cO(\nclass)$ time, where there are $\nclass$ equivalence classes.
Hence the total time to evaluate $\diff{EC}{\prlzvec{\cA}}{\test}$ is 
$\cO(\nhypo + \nclass \nout)$ using this method.
In a similar manner, we can reduce the running time still further to
$\cO(\nhypo)$, by incrementally computing terms such as 
$\paren{\sum_{i} \alpha(i,\testval)}^2$ and 
$\sum_{i} \alpha(i,\testval)^2$ as we iterate through the hypotheses.
The total time per round of \algoecd 
is then $\cO\paren{\ntests \nhypo}$.
Additionally, the number of evaluations the algorithm needs to make
can often be significantly reduced in practice using the
\emph{accelerated adaptive greedy} algorithm, as discussed
in~\cite{golovin10adaptive}.
} %

\commentout{
\subsection{Improvements via Perturbing the Prior}

For the \ODT problem, 
GBS
achieves an $\ln(1/\pmin) + 1$
approximation~\cite{golovin10adaptive}, and there is a 
lower bound of $(0.24 \ldots)\ln(1/\pmin)$  on its approximation ratio
due to Dasgupta~\cite{dasgupta04}, even in the case of unit cost tests.
However, in the case of unit cost tests Kosaraju et al.~\cite{kosaraju99} explain
how one can perturb the prior to obtain an $\cO(\log n)$ approximation, where $n := |\hypotheses|$.
The idea is to shift probability mass from likely hypotheses to
unlikely hypotheses and hence increase $\pmin$, by using prior 
$\prior'(h) \propto \max\set{\prior(h), 1/n^2}$, and then run
generalized binary search with the modified prior.
The same technique, with the same modified prior, can be used for the \ECD problem if the tests
have unit cost, and the justification is essentially identical\footnote{See\cite{kosaraju99} or \cite{golovin10adaptive} for details.}. 
Hence by running the adaptive greedy algorithm on objective $f$
defined with respect to the modified prior, we obtain the following result.

\begin{theorem} \label{thm:equiv-class-determination-unit-cost}
Suppose all tests have unit cost and $\prior(h)$ is rational for all $h \in \hypotheses$.
There is an algorithm for the \ECD problem with an approximation ratio of 
$\cO(\log(n))$, where $n := |\hypotheses|$ is the number of hypotheses.
 \end{theorem}

\daniel{Perhaps mention Guillory and Bilmes~\cite{guillory09} modification
  of Kosaraju technique to nonuniform costs.  Check if it applies to us.  Maybe mention
  Gupta et al.~\cite{gupta10approximation} result.}

}
\vspace{\gnat}\section{Bayesian Active Learning with Noise and the \algoname Algorithm}\vspace{\gnat}
\label{sec:noise}

We now address the case of noisy observations, using ideas
from~\secref{sec:equiv-class-determination}.
With noisy observations, the conditional distribution
$P(\testvar_{1},\dots,\testvar_{\ndata}\mid\hypothesis)$ is no longer
deterministic.  We model the noise using 
an additional random variable $\Theta$.
\figref{fig:error-model} depicts the underlying graphical model.
The vector of test outcomes $\btestsval_{\groundset}$ is assumed to be
an arbitrary, deterministic
function $\ferr:\hypotheses \times \support(\Theta)  \to
\outcomes^\ntests$; hence 
$\btestvar_{\groundset} \mid \hypothesis$ is distributed as 
$\ferr(\hypothesis, \Theta_{\hypothesis})$ where $\Theta_{\hypothesis}$ is distributed as 
$\prior(\theta \mid \hypothesis)$.
For example, there might be up to $s = |\support(\Theta)|$
ways any particular disease could manifest itself, with different
patients with the same disease suffering from different
symptoms.

In cases where it is always possible to identify the true 
hypothesis, i.e., $\ferr(\hypothesis, \theta) \neq \ferr(\hypothesis',
\theta')$ for all $\hypothesis \neq \hypothesis'$ and all 
$\theta, \theta' \in \support(\Theta)$, we can reduce the problem to
\ECD with hypotheses $\set{\ferr(h, \theta)
  : h \in \H, \theta \in \support(\Theta) }$ and equivalence classes 
$\H_{i} := \set{\ferr(h_i, \theta) : \theta \in \support(\Theta) }$
for all $i$.
Then \thmref{thm:equiv-class-determination} immediately yields that 
the approximation factor of \algoecd is at most $2  \ln \paren{1/
  \min_{h,\theta} \prior(h, \theta)}  +1$, where the 
minimum is taken over all $(h,\theta)$ in the support of $\prior$.
In the unit cost case, running \algoecd with a modified prior \`{a} la
Kosaraju~et~al.~\cite{kosaraju99} allows us to obtain an
$\cO(\log |\hypotheses| + \log|\support(\Theta)|)$ approximation factor.
Note this model allows us to incorporate noise with complex
correlations.

\looseness -1 
However, a major challenge when dealing with noisy observations is
that it is not always possible to distinguish distinct hypotheses.
Even after we have run all tests, there will generally still be uncertainty
about the true hypothesis, i.e., the posterior distribution
$P(\hvar\mid\rlzvec)$ obtained using Bayes' rule may still
assign non-zero probability to more than one hypothesis. 
If so,  uniquely determining the true hypothesis is not possible.  Instead, we imagine that there is a set $\decset$ of
possible \emph{decisions} we may make after (adaptively) selecting a
set of tests to perform and we must choose one 
(e.g., we must decide how to treat the medical patient, which scientific
theory to adopt, or which classifier to use, given our
observations).  Thus our goal is to gather data to make effective decisions \cite{howard66voi}.
Formally, for any decision $\dec \in \decset$  we take, and each 
realized hypothesis $\hypothesis$,
we incur some loss $\loss(\dec,\hypothesis)$. Decision theory
recommends, after observing $\prlzvec{\cA}$, to choose the decision $d^{*}$ that minimizes the \emph{risk}, i.e., the
expected loss, namely $d^{*}\in\argmin_{d}
\expctover{\hvar}{\loss(\dec,\hvar)\mid\prlzvec{\cA}}$.

A natural goal in Bayesian active learning is thus to adaptively pick
observations, until we are guaranteed to make the same decision (and
thus incur the same expected loss) that we would have made had we
run \emph{all} tests. Thus, we can reduce the noisy
Bayesian active learning problem to the \ECDs problem by defining the
equivalence classes over all test outcomes that lead to the same  minimum
risk decision. Hence, for each decision $\dec\in\decset$, we define
\begin{equation}
  \label{eq:noisy-equiv-classes-def}
  \H_{\dec}:=\{\prlzvec{\groundset}: \dec=\argmin_{\dec'}
\expctover{\hvar}{\loss(\dec',\hvar)\mid \prlzvec{\groundset}}\}. \vspace{\gnat}
\end{equation}
If multiple decisions minimize the risk for a particular $\prlzvec{\groundset}$, we break ties arbitrarily.
Identifying the best decision $d \in \decset$ then amounts to
identifying which equivalence class $\H_{\dec}$ contains the realized
vector of outcomes, which is an instance of ECD.

\looseness -1 One common approach to this problem is to myopically pick tests 
maximizing the decision-theoretic \emph{value of information}
(VoI):
$\diff{VoI}{\prlzvec{\cA}}{\test} := \min_{\dec}
\expctover{\hvar}{\loss(\dec,\hvar)\mid \prlzvec{\cA}} - 
\expctover{\testval_{\test} \sim \testvar_{\test}\mid
  \prlzvec{\cA}}
{\min_{\dec} \expctover{\hvar}{\loss(\dec,\hvar)\mid \prlzvec{\cA}, \testval_{\test}}}$.
The VoI of a test $\test$ is the expected reduction in the
expected loss of the best decision due to the observation of 
$\testval_{\test}$.  
However, we can show there are
instances in which such a policy pays $\Omega(n/\log(n))$ times the
optimal cost, even under a uniform prior on $(h, \theta)$ and with 
$|\support(\Theta)| = 2$.  
\ifthenelse{\boolean{istechrpt}}{
Refer to Appendix~\ref{sec:infogain-stinks} for details.
}{ 
See the long version of this paper~\cite{dtree-arxiv-version} for details.
}
In contrast, on such instances \algoecd obtains an $\cO(\log n)$
approximation.  
More generally, we have the following result for \algoecd as an
immediate consequence of \thmref{thm:equiv-class-determination}.
\ignore{
\begin {figure} [t]
	\begin {center}
		\includegraphics [height=1.5in]{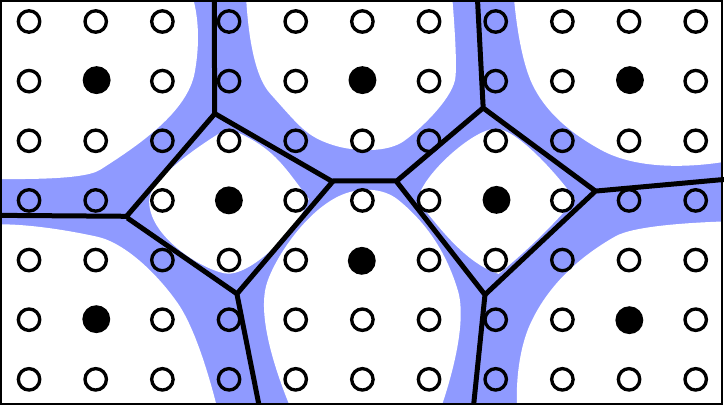} 
	\end {center}
	\caption {A schematic representation of the modified hypothesis space.
          Here, each circle represents a vector of outcomes for the
          complete set of queries $\data$.  Solid circles represent
          the (noiseless) signature of hypotheses; the other circles
          represent noisy versions of same.  The black lines indicate
          the cell boundaries, where a cell is a maximal set of points
        with the same MAP estimate.  The shaded region denote areas of
      severe error.  In \thmref{thm:noisy-case}, we condition on the
      realization lying outside the shaded region.
\daniel{Define ``modified hypothesis space'' (i.e., assuming MAP
  estimate after all queries made is correct), ``signature of a hypothesis'', and ``cells''.}
}
	\label {fig:voronoi}
\end {figure}
} %

\begin{theorem} \label{thm:noisy-case}
Fix hypotheses $\hypotheses$, tests $\tests$ with costs
$\cost(\test)$ and outcomes in $\outcomes$, decision set $\decset$, and
loss function $\ell$.
Fix a prior $P(\hvar, \Theta)$ and a function $\ferr:\hypotheses \times \support(\Theta)  \to
\outcomes^\ntests$ which define the probabilistic noise model.
Let $\cavgdec{\policy}$ denote the expected cost of $\policy$ incurs to
find the best decision, i.e., to identify which equivalence class
$\H_{\dec}$ the outcome vector $\btestsval_{\tests}$ belongs to.
Let $\policy^*$ denote the policy minimizing $\cavgdec{\cdot}$, and let 
$\policy_{EC}$ denote the adaptive policy implemented by  \algoecd. 
Then it holds that
$$\cavgdec{\policy_{EC}} \le \paren{2
  \ln \paren{\frac{1}{p'_{\min}}} +1 }
\cavgdec{\policy^*}, $$
where $p'_{\min} := \min_{\hypothesis \in \hypotheses}
\set{\prior(h,\theta) \ :\ \prior(h,\theta) > 0}$.
\end{theorem}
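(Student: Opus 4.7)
The plan is to reduce the noisy Bayesian active learning problem to a (noiseless) instance of \ECDs, and then invoke \thmref{thm:equiv-class-determination}. This essentially amounts to verifying that the reduction informally described in the paragraphs preceding the theorem is correct and tight.

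First I would construct the reduced \ECD instance explicitly. Its ground set of hypotheses is $\tilde{\hypotheses} := \set{(h,\theta) \in \hypotheses \times \support(\Theta) : \prior(h,\theta) > 0}$, with prior $P'(h,\theta) := \prior(h,\theta)$. Each pair $(h,\theta) \in \tilde{\hypotheses}$ deterministically produces the outcome vector $\ferr(h,\theta) \in \outcomes^{\ntests}$, so from the point of view of the tests we are in the noiseless setting of \secref{sec:equiv-class-determination}. The equivalence classes are
\[
\tilde{\H}_d := \set{(h,\theta) \in \tilde{\hypotheses} : \ferr(h,\theta) \in \H_d}, \quad d \in \decset,
\]
where $\H_d$ is defined in \eqnref{eq:noisy-equiv-classes-def}. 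Since for any $\prlzvec{\groundset}$ in the support of $P'$ the class $\H_d$ containing it is uniquely defined (with ties broken arbitrarily but consistently), the $\tilde\H_d$ partition $\tilde\hypotheses$.

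Next I would argue that the reduction preserves expected cost. Any policy $\policy$ over the tests $\tests$ is simultaneously a policy for the original noisy problem and for the reduced \ECD instance, because both instances share the same tests, the same distribution over test outcomes (since $\btestvar_\groundset \mid (h,\theta)$ equals $\ferr(h,\theta)$ deterministically, and marginalizing out $\theta$ recovers the original $\prior(\btestvar_\groundset \mid h)$), and the same induced distribution over realized outcome vectors $\btestsval_{\tests}$. Moreover, $\policy$ is feasible for the reduced \ECD instance exactly when, upon termination with observation $\prlzvec{\cA}$, the posterior $P'(\cdot \mid \prlzvec{\cA})$ is supported on a single $\tilde\H_d$; by construction this is equivalent to $\argmin_{d'} \expctover{\hvar}{\loss(d',\hvar)\mid\prlzvec{\cA}}$ being determined regardless of any further test outcomes, i.e., to $\policy$ reaching the best decision in the original sense. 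Hence $\cavgdec{\policy}$ equals the expected cost of $\policy$ on the reduced \ECD instance, so $\cavgdec{\policy^*}$ equals the cost of the optimal \ECD policy, and $\policy_{EC}$ on the reduced instance coincides with the algorithm stated in the theorem.

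Finally, I would invoke \thmref{thm:equiv-class-determination} on the reduced instance: by \propref{prop:ecprop}, the objective $\fec$ associated with this instance is strongly adaptively monotone and adaptively submodular, the instance is self--certifying (the version space lies inside some $\tilde\H_d$ iff the optimal decision is identified, which the algorithm can verify from $\prlzvec{\cA}$), and the minimum prior on a hypothesis is exactly $p'_{\min}$. The theorem then yields $\cavgdec{\policy_{EC}} \le (2\ln(1/p'_{\min}) + 1)\,\cavgdec{\policy^*}$, as required. The main obstacle is conceptual rather than technical: checking that feasibility in the reduced \ECD instance is exactly the condition ``the optimal decision is determined,'' so that the two cost objectives agree on every policy; once this correspondence is nailed down, the theorem follows from \thmref{thm:equiv-class-determination} as a black box.
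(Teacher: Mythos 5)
Your proposal is correct and follows exactly the route the paper takes: it reduces the noisy problem to an \ECDs instance whose hypotheses are the pairs $(h,\theta)$ with deterministic outcome vectors $\ferr(h,\theta)$ and whose classes are induced by~\eqnref{eq:noisy-equiv-classes-def}, and then applies \thmref{thm:equiv-class-determination} with minimum prior probability $p'_{\min}$. The paper states the result as an immediate consequence of that reduction; your write-up merely makes explicit the cost-preservation and feasibility correspondence that the paper leaves implicit.
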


If all tests have unit cost, by using
a modified prior~\cite{kosaraju99} the approximation factor can be improved to 
$\cO\paren{\log |\hypotheses| +  \log |\support(\Theta)|}$ 
as in the case of \thmref{thm:equiv-class-determination}.

\ignore{
\begin{theorem} \label{thm:noisy-case-old}
Fix any probabilistic model $P(\hvar,\testvec{\tests})$ over
hypotheses $\hvar\in\hypotheses$ and tests
$\testvar_{\test}\in\outcomes$ with costs $\cost(\test)$, decision set
$\decset$ and loss function $\ell$.  Further 
let $\delta > 0$ be such that 
$P(\prlzvec{\groundset} \mid \hypothesis)\ge \delta$
for all $(h, \prlzvec{\groundset})$ in the support of
$P$.  
Let $\policy^*$ denote the policy minimizing $\cavgdec{\cdot}$, and let 
$\policy_{EC}$ denote the adaptive policy implemented by  \algoecd. %
Then it holds that
$$\cavgdec{\policy_{EC}} \le \paren{2
  \ln \paren{\frac{1}{p_{\min}}}  + 2 \ln \paren{\frac{1}{\delta}} +1 }
\cavgdec{\policy^*}, $$
where $p_{\min} := \min_{\hypothesis \in \hypotheses} \prior(h)$.
If all tests have unit cost, the approximation factor can be improved to 
$\cO(\log |\hypotheses|/\delta)$.
\end{theorem}
} %

\paragraph{The \algoname algorithm.}
For some noise models, $\Theta$ may have exponentially--large
support.  In this case reducing Bayesian active learning with noise to
\ECD results in instances with exponentially-large equivalence
classes.  This makes running \algoecd on them challenging, since 
explicitly keeping track of the equivalence classes is impractical.
To overcome this challenge,
we develop \algoname, a particularly efficient algorithm which
approximates 
\algoecd.

\daniel{Some issues that could be improved: $i$ is overloaded as a
  test, or a equiv class index, we're using $h_i$ to refer to classes $\H_d$.}

For clarity, 
we only consider the $0\!-\!1$ loss, i.e.,
our goal is to find the most likely hypothesis (MAP estimate) 
given all the data $\prlzvec{\data}$, namely
$\htrue(\prlzvec{\data}) :=\argmax_{\hypothesis}P(\hypothesis\mid
\prlzvec{\data})$. Recall definition (\ref{eq:noisy-equiv-classes-def}), and consider the weight of edges between distinct
equivalence classes $\H_{i}$ and $\H_{j}$:
\vspace{\gnat}
$$w(\H_{i}\times\H_{j})= \hspace{-2em}\sum_{\rlzvec
  \in\H_{i},\rlzvec'\in\H_{j}} \hspace{-1.9em}
\prior(\rlzvec)\prior(\rlzvec')\!=\!\Bigl(\sum_{\rlzvec\in\H_{i}}
\hspace{-0.5em} \prior(\rlzvec)\Bigr)\Bigl(\sum_{\rlzvec'\in\H_{j}} \hspace{-0.5em} \prior(\rlzvec')\Bigr)=\prior(\testvec{\data}\in\H_{i})\prior(\testvec{\data}\in\H_{j}). \vspace{\gnat}$$
In general,  $P(\testvec{\data}\in\H_{i})$ can be estimated to
arbitrary accuracy using a rejection sampling approach with bounded
sample complexity. We defer details to the full version of the paper. Here, we focus on the case where, upon observing all tests, the hypothesis is uniquely determined, i.e., $P(\hvar\mid\rlzvec)$ is deterministic for all $\rlzvec$ in the support of $P$.
In this case, it holds that $P(\testvec{\data}\in\H_{i})=P(\hvar=\hypothesis_{i})$. 
Thus, the total weight is 
$$\sum_{i\neq j}w(\H_i\times\H_{j})=\Bigl(\sum_{i}P(\hypothesis_{i})\Bigr)^{2}-\sum_{i}P(\hypothesis_{i})^{2}=1-\sum_{i}P(\hypothesis_{i})^{2}.$$
This insight motivates us to use the objective function
$$\diff{Eff}{\prlzvec{\cA}}{\test} :=\Bigl[\sum_{\outcome}P(\testvar_{\test}=\outcome\mid\prlzvec{\cA})
\Big(\sum_{i}P(\hypothesis_{i}\mid \prlzvec{\cA},\testvar_{\test}=\outcome)^{2}\Big)\Bigr]-\sum_{i}P(\hypothesis_{i}\mid \prlzvec{\cA})^{2},$$
which is the expected reduction in weight from the prior to the posterior distribution. Note that the weight of a distribution $1-\sum_{i}P(\hypothesis_{i})^{2}$ is a monotonically increasing function of the R\'{e}nyi entropy (of order 2), which is $-\frac{1}{2}\log\sum_{i}P(\hypothesis_{i})^{2}$. Thus the objective $\badtassdelta$ can be interpreted as a (non-standard) information gain in terms of the (exponentiated) R\'{e}nyi entropy. 
In our experiments, we show that this criterion performs well in comparison to existing experimental design criteria, including the classical Shannon information gain. Computing $\diff{Eff}{\prlzvec{\cA}}{\test}$ requires us to perform one inference task for each outcome $\outcome$ of $\testvar_{\test}$, and $\mathcal{O}(\nhypo)$ computations to calculate the weight for each outcome. 
We call the  algorithm that greedily optimizes $\badtassdelta$ the
\algoname algorithm (since it uses an
Efficient Edge Cutting approXimate objective), and present pseudocode in
\algref{alg:greedy}. 

\begin{algorithm}
 \KwIn{Set of hypotheses $\hypotheses$; Set of tests $\data$; prior distribution $P$; function $f$.}
  \Begin{
   $\cA\leftarrow\emptyset$\;
   \While{$\exists \hypothesis\neq\hypothesis': P(\hypothesis\mid\prlzvec{\cA})>0\text{ and } P(\hypothesis'\mid\prlzvec{\cA})>0$}{
       \ForEach{$\test\in\data\setminus \cA$}{
$\diff{Eff}{\prlzvec{\cA}}{\test} := \! \Bigl[\sum_{\outcome}\! P(\testvar_{\test}=\outcome\mid\prlzvec{\cA})
\paren{\sum_{i}\! P(\hypothesis_{i}\mid \prlzvec{\cA},\testvar_{\test}=\outcome)^{2}}\Bigr]-\sum_{i}P(\hypothesis_{i}\mid \prlzvec{\cA})^{2}$\;    
       }
       Select $\test^{*}\in\argmax_{\test} \diff{Eff}{\prlzvec{\cA}}{\test}/\cost(\test)$;  Set $\cA\leftarrow \cA\cup\{\test^{*}\}$ 
and observe outcome $\outcome_{\test^{*}}$\;%
   }
 } \label{alg:greedy} \caption{The \algoname algorithm using the 
Efficient Edge Cutting approXimate objective.}
\end{algorithm}

\vspace{\gnat}\section{Experiments} \label{sec:experiments}\vspace{\gnat}
\looseness -1 
Several economic theories make claims to explain how people make
decisions when the payoffs are uncertain. Here we use human subject
experiments to compare four key theories proposed in literature. 
The uncertainty of the payoff in a given situation is represented by a
\emph{lottery} $L$, which is simply a random variable with a range of
\emph{payoffs} $\payoffs := \set{\payoff_1, \ldots, \payoff_{k}}$.
For our purposes, a payoff is an integer denoting how many dollars you
receive (or lose, if the payoff is negative).
Fix lottery $L$, and let $p_i := \prob{L = \payoff_i}$.
The four theories posit distinct utility functions, with agents
preferring larger utility lotteries.  Three of the theories have
associated parameters.
The \emph{Expected Value theory}~\cite{vonneumann47} posits simply 
$U_{EV}(L) = \expct{L}$, and has no parameters.
\emph{Prospect theory}~\cite{kahneman79} posits 
$U_{PT}(L) = \sum_{i} f(\payoff_{i}) w(p_{i})$ for nonlinear functions
$f(\payoff_{i}) = \payoff_{i}^{\rho}$, if $\payoff_{i} \geq 0$ and
$f(\payoff_{i}) = -\lambda (-\payoff_{i})^{\rho}$, if $\payoff_{i} < 0$, and
$w(p_{i}) = e^{-(\log(1/p_{i}))^{\alpha}}$~\cite{prelec98}. The
parameters $\Theta_{PT}=\{\rho, \lambda, \alpha\}$ represent risk
aversion, loss aversion and probability weighing factor
respectively.  For portfolio optimization problems,
financial economists have used value functions that give 
weights to different moments of the lottery~\cite{hanoch70}: $U_{MVS}(L) = w_{\mu}\mu -
w_{\sigma}\sigma + w_{\nu}\nu$, where $\Theta_{MVS} = \{w_{\mu},
w_{\sigma}, w_{\nu}\}$ are the weights for the mean, standard
deviation and standardized skewness of the lottery respectively. 
In
\emph{Constant Relative Risk Aversion theory}~\cite{pratt64}, there is
a parameter $\Theta_{CRRA} = a$ representing the level of risk aversion, and the
utility posited is 
$U_{CRRA}(L)=\sum_{i} p_{i}\payoff_{i}^{1-a}/(1-a)$ if $a \neq 1$, and $U_{CRRA}(L)
=\sum_{i} p_{i} \log(\payoff_{i})$, if $a=1$. 

The goal is to adaptively select a sequence of tests to present to a
human subject in order to distinguish which of the four theories 
best explains the subject's responses. 
Here a test $\test$ is a pair of lotteries, $(L_1^{\test}, L_2^{\test})$.
Based on the theory that represents behaviour, one of the
lotteries would be preferred to the other, denoted by a binary
response $\outcome_{\test} \in \{1, 2\}$.  The possible payoffs were fixed to $\payoffs
= \{-10, 0, 10\}$ (in dollars), and the distribution $(p_{1}, p_{2}, p_{3})$ over the
payoffs was varied, where $p_{i} \in
\set{0.01, 0.99} \cup \set{0.1, 0.2,\ldots, 0.9}$.  
By considering all non-identical pairs of such lotteries, we obtained
the set of possible tests.

\daniel{Andreas, please verify my def of uncertainty sampling.}

\looseness -1 We compare six algorithms:
\algoname, greedily maximizing Information Gain
(IG), Value of Information (VOI), Uncertainty Sampling\footnote{Uncertainty sampling greedily selects the test whose outcome distribution has maximum Shannon entropy.} (US), Generalized Binary Search (GBS), and
tests selected at Random. We evaluated the ability of the algorithms to 
recover the true model based on simulated responses. We chose
parameter values for the theories 
such that they made distinct predictions and were consistent with the
values proposed in literature~\cite{kahneman79}. We drew $1000$
samples of the true model and fixed the parameters of the model to
some canonical values, $\Theta_{PT}=\{0.9,2.2,0.9\},
\Theta_{MVS}=\{0.8,0.25,0.25\}, \Theta_{CRRA}=1$. Responses were
generated using a softmax function, with the probability of response $\testval_{\test}=1$
given by $P(\testval_{\test}=1) = 1/(1+e^{U(L_{2}^{\test})-U(L_{1}^{\test})})$. \figref{fig:experiments}(a)
shows the performance of the $6$ methods, in terms of the accuracy of
recovering the true model with the number of tests. We find that US, GBS 
and VOI perform significantly worse than Random in the presence of
noise. \algoname outperforms InfoGain significantly, which outperforms
Random.  

We also considered uncertainty in the values of the parameters, by
setting $\rho$ from 0.85-0.95, $\lambda$ from 2.1-2.3, $\alpha$ from
0.9-1; $w_{\mu}$ from 0.8-1.0, $w_{\sigma}$ from 0.2-0.3, $w_{\nu}$
from 0.2-0.3; and $a$ from 0.9-1.0, all with 3 values per
parameter. We generated 500 random samples by first randomly sampling
a model and then randomly sampling parameter values. \algoname and
InfoGain outperformed Random significantly,
\figref{fig:experiments}(b), although InfoGain did marginally better
among the two. The increased parameter range potentially poses model identifiability 
issues, and violates some of the assumptions behind \algoname, decreasing its
performance to the level of InfoGain.

\begin{figure}[t]
\centering \subfigure[\emph{Fixed parameters}]{
\includegraphics[width=0.31\textwidth]{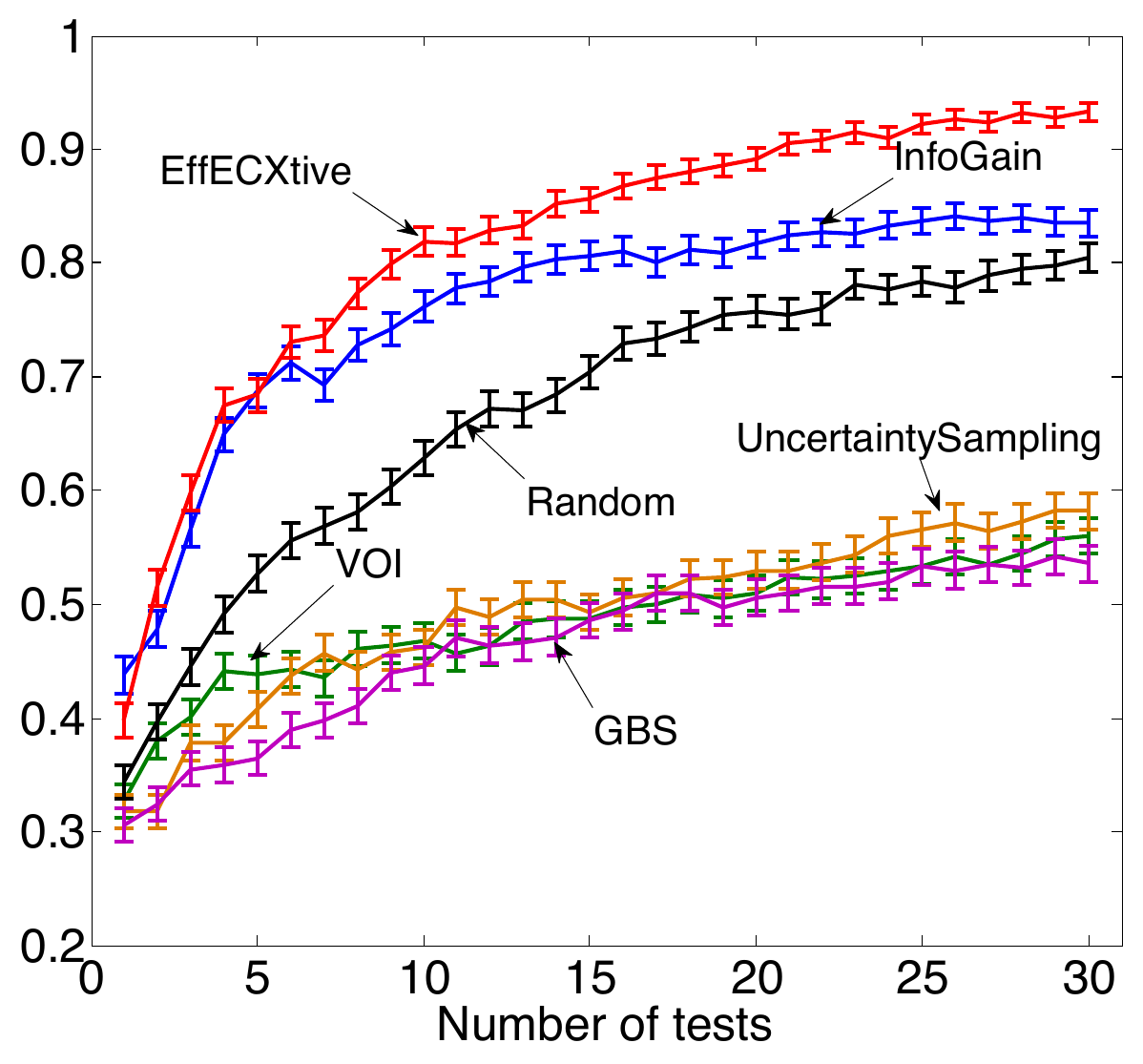}
 \label{fig:experiment-fixed-params}
 }
\subfigure[\emph{With parameter uncertainty}]{
\includegraphics[width=0.31\textwidth]{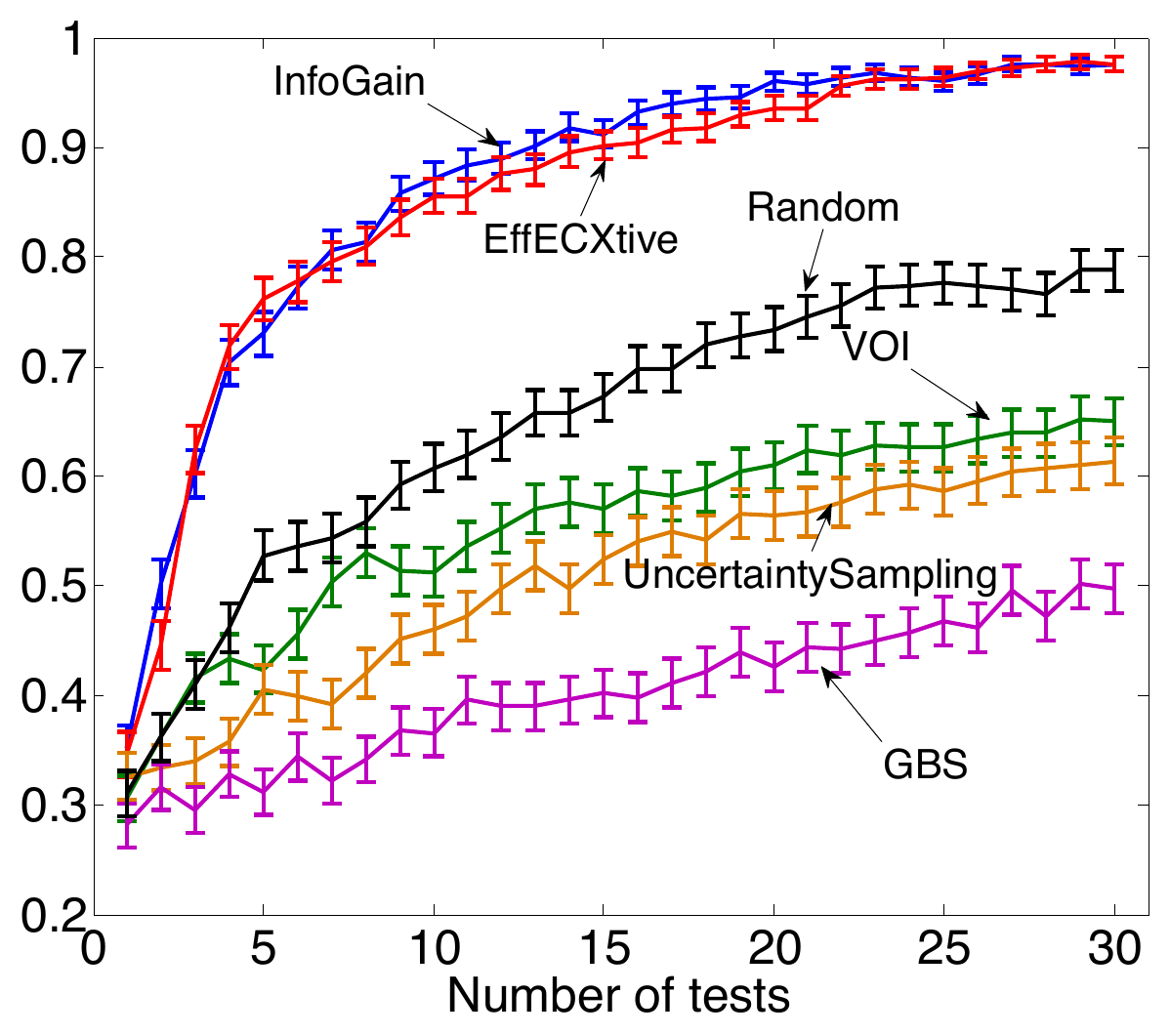}
 \label{fig:experiment-uncertain-params}
 }
\subfigure[\emph{Human subject data}]{
\includegraphics[width=0.32\textwidth]{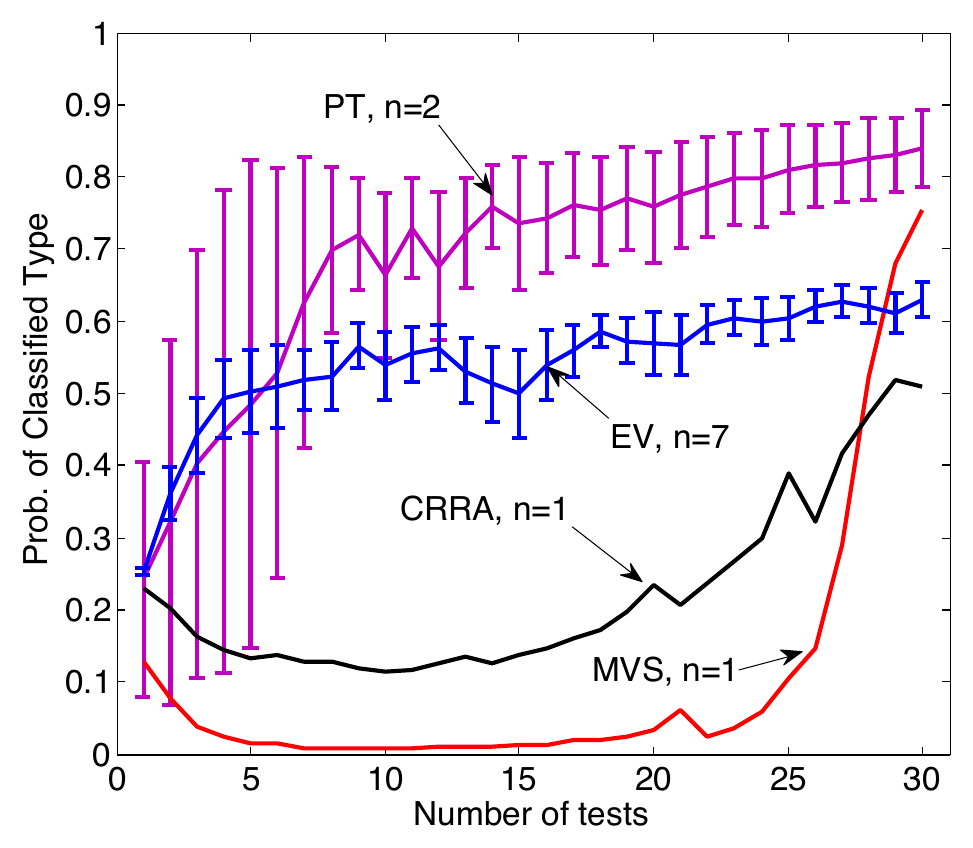}
 \label{fig:experiment-human-subjects}
 }
 \vspace{-3mm}
 \caption{\footnotesize (a) Accuracy of identifying the true model with fixed parameters,
(b) Accuracy using a grid of parameters, incorporating uncertainty in their values,
(c) Experimental results: 11 subjects were classified into the
theories that described their behavior best. We plot probability of
classified type.\label{fig:experiments}
 }\vspace{-3mm}
\end{figure}

After obtaining informed consent according to a protocol approved by
the Institutional Review Board of Caltech, we tested 11 human subjects
to determine which model fit their behaviour best. Laboratory
experiments have been used previously to distinguish economic
theories, \cite{camerer89}, and here we used a real-time, dynamically
optimized experiment that required fewer tests. Subjects were
presented 30 tests using \algoname.  To incentivise the subjects, one
of these tests was picked at random, and subjects received payment
based the outcome of their chosen lottery. The behavior of most
subjects (7 out of 10) was best described by EV. This is not
unexpected given the high quantitative abilities of the
subjects. We also found heterogeneity in classification: One subject
got classified as MVS, as identified by violations of stochastic
dominance in the last few choices. 2 subjects were best described by
prospect theory since they exhibited a high degree of loss aversion
and risk aversion. One subject was also classified as a CRRA-type
(log-utility maximizer). Figure~\ref{fig:experiments}(c) shows the
probability of the classified model with number of tests. Although we
need a larger 
sample to 
make significant claims of
the validity of different economic theories, our preliminary results
indicate that subject types can be identified and there is
heterogeneity in the population.  They also serve as 
an example of the benefits of using 
real-time dynamic experimental design 
to collect data on human behavior.

\vspace{\gnat}\section{Conclusions} \label{sec:conclusions}\vspace{\gnat}
\looseness -1 In this paper, we considered the problem of adaptively
selecting which noisy tests to perform in order to identify an unknown
hypothesis sampled from a known prior distribution. We studied the
Equivalence Class Determination problem as a means to reduce the case
of noisy observations to the classic, noiseless case. 
We introduced \algoecd, an adaptive greedy algorithm that is
guaranteed to choose the same hypothesis as if it had observed the
outcome of all tests, and incurs near-minimal expected cost among all
policies with this guarantee.  This is in contrast to popular
heuristics that are greedy w.r.t. version space mass reduction,
information gain or value of information, all of which we show can be 
very far from optimal.
\algoecd works by greedily optimizing an objective tailored to
differentiate between sets of observations that lead to different
decisions. Our bounds rely on the fact that this objective function
is adaptive submodular. We also develop \algoname, a practical 
algorithm based on \algoecd, that can be applied to arbitrary
probabilistic models in which efficient exact inference is
possible. We apply \algoname to a Bayesian experimental design
problem, and our results indicate its effectiveness in comparison to
existing algorithms. We believe that our results provide an
interesting direction towards providing a theoretical foundation for
practical active learning and experimental design problems.

\ifthenelse{\boolean{anonymous}}{ }{

{\small
\paragraph{Acknowledgments.}
This research was partially supported by ONR grant N00014-09-1-1044,
NSF grant CNS-0932392, NSF grant IIS-0953413, a 
gift by Microsoft Corporation, an Okawa Foundation Research Grant, and
by the Caltech Center for the Mathematics of Information.
The authors also thank Yuxin Chen and Amin Karbasi for pointing out an
error in the original proof of
Lemma~\ref{lem:classification-objective-submod} and simultaneously
providing a fix.
}

}
\ifthenelse{\boolean{istechrpt}}
{ }
{\newpage}
\bibliographystyle{plain} %
{\small
\bibliography{dtree}
}

\ifthenelse{\boolean{istechrpt}}{ 
\appendix
\section{Additional Proofs} \label{sec:proofs}

\newcommand{\nvs}{\ensuremath{n_{\cV}}}

\begin{lemma} \label{lem:classification-objective-monotone}
The objective function $f$ of~\eqnref{eq:class1} is strongly
adaptive monotone.  
\end{lemma}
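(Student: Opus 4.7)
The plan is to unpack the definition of strong adaptive monotonicity and exploit the fact that we are in the noiseless \ECDs setting, so that $\fec(\cA, \hvar)$ depends on $\hvar$ only through the (deterministic) outcome vector $\prlzvec{\cA}$.

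First I would rewrite what needs to be proved. By definition, I must show that for every $\cA \subseteq \tests$, every $\prlzvec{\cA}$ with $\prior(\prlzvec{\cA}) > 0$, every $\test \notin \cA$, and every $\outcome \in \outcomes$ with $\prior(\testvar_{\test} = \outcome \mid \prlzvec{\cA}) > 0$,
$$\expctover{\hvar}{\fec(\cA, \hvar) \mid \prlzvec{\cA}} \;\le\; \expctover{\hvar}{\fec(\cA \cup \set{\test}, \hvar) \mid \prlzvec{\cA}, \testvar_{\test} = \outcome}.$$

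Next I would make the key observation: since the \ECDs problem inherits the noiseless setting, $\prior(\prlzvec{\cA} \mid \hvar = \hypothesis) \in \set{0,1}$. Consequently, for any $\hypothesis$ in the support of the conditional $\prior(\cdot \mid \prlzvec{\cA})$ and any $\test' \in \cA$, the set $\cut{\test'}{\hypothesis}$ is determined entirely by the coordinate $\prlzvec{\cA}(\test') = \hypothesis(\test')$. Thus the random variable $\fec(\cA, \hvar)$ is $\prior(\cdot \mid \prlzvec{\cA})$--almost surely equal to the deterministic quantity $\w\bigl(\bigcup_{\test' \in \cA} \cut{\test'}{\prlzvec{\cA}}\bigr)$, and analogously for $\fec(\cA \cup \set{\test}, \hvar)$ under the additional conditioning on $\testvar_{\test} = \outcome$. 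Hence the two expectations above collapse to
$$\w\!\Bigl(\bigcup_{\test' \in \cA} \cut{\test'}{\prlzvec{\cA}}\Bigr) \quad \text{and} \quad \w\!\Bigl(\bigcup_{\test' \in \cA \cup \set{\test}} \cut{\test'}{\prlzvec{\cA} \cup \set{(\test,\outcome)}}\Bigr)$$
respectively, where with a slight abuse of notation $\cut{\test'}{\prlzvec{\cA}}$ denotes the set of edges $\set{h',h''} \in \edges$ such that $h'(\test') \neq \prlzvec{\cA}(\test')$ or $h''(\test') \neq \prlzvec{\cA}(\test')$.

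Finally I would finish by pure set-theoretic monotonicity: the second union contains the first (it ranges over a strictly larger index set), and $\w$ is an additive, non-negative set function on $\edges$, so enlarging the set of edges cannot decrease $\w$. The desired inequality follows.

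I do not expect any real obstacle here — the entire argument hinges on the single observation that in the noiseless regime $\fec(\cA, \cdot)$ is measurable with respect to $\prlzvec{\cA}$, after which strong adaptive monotonicity reduces to monotonicity of $\w$ under set inclusion. The only mildly delicate point worth stating carefully is this measurability claim, since it is what lets us drop the expectation and compare deterministic quantities on both sides of the inequality.
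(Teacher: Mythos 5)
Your proof is correct and follows essentially the same route as the paper's: both arguments rest on the observation that $\cut{\test'}{\hypothesis}$ depends on $\hypothesis$ only through the outcome $\hypothesis(\test')$, so that conditioned on $\prlzvec{\cA}$ the quantity $\fec(\cA,\hvar)$ is deterministic, after which the claim reduces to monotonicity of the nonnegative additive weight function $\w$ under set inclusion. No gaps.
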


\begin{proof}
 We must show that for all $\prlzvec{\cA}$, $\test \notin \cA$ and
 possible answer $\outcome$ for test $\test$ that 
 \begin{equation}
   \label{eq:class2}
   \expctover{\hvar}{f(\cA, \hvar) \ \mid \ \prlzvec{\cA}} \le 
\expctover{\hvar}{f(\cA \cup \set{\test}, \hvar) \ \mid \ 
  \prlzvec{\cA}, \testvar_{\test} = \outcome} %
 \end{equation}
Towards this end, it is useful to notice that for all $\test \in \tests$ the function $\hypothesis
\mapsto \cut{\test}{\hypothesis}$ depends only on $\testvar_{\test}$.
Hence for any
$\prlzvec{\cA}$, the function $\hypothesis \mapsto f(\cA, \hypothesis)$ is constant
over realizations $\prlzvec{\groundset} \succ \prlzvec{\cA}$, so we
can define a function $g(\prlzvec{\cA})$ such that 
$g(\prlzvec{\cA}) = \expctover{\hvar}{f(\cA, \hvar) \mid \prlzvec{\cA}}$
by 
$g(\prlzvec{\cA}) := \w\paren{\bigcup_{\test \in \cA} \cut{\test}{\outcome_{\test}} }$
where $\prlzvec{\cA} = (\outcome_{\test})_{\test \in \cA}$ and
$\cut{\test}{\outcome}$ is the set of edges cut by $\test$ if
$\testvar_{\test} = \outcome$.
Note that for all $\prlzvec{\cA} \prec \prlzvec{\cB}$ we have
$g(\prlzvec{\cA}) \le g(\prlzvec{\cB})$, since the edge weights are
nonnegative.  Setting $\cB = \cA \cup \set{\test}$
yields~\eqnref{eq:class2} and hence implies strong adaptive
monotonicity.
\end{proof}

\begin{lemma} \label{lem:classification-objective-submod}
The objective function $f$ of~\eqnref{eq:class1} is adaptive
submodular for any prior with rational values.
\end{lemma}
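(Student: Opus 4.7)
The plan is to prove $\diff{EC}{\prlzvec{\cA}}{t} \geq \diff{EC}{\prlzvec{\cB}}{t}$ for all $\prlzvec{\cA} \prec \prlzvec{\cB}$ and tests $t$ by direct algebraic computation. First I derive a closed form for the marginal gain. Let $V := \vs(\prlzvec{\cA})$, let $V_x := V \cap \set{h : h(t) = x}$, and let $E(S) := \set{e \in \edges : e \subseteq S}$. In the noise-free setting, the edges not yet cut by the tests in $\cA$ (when the true hypothesis lies in $V$) are exactly $E(V)$, and observing $\testvar_t = x$ shrinks the version space to $V_x$, leaving $E(V_x)$ uncut. Since $\Pr[\testvar_t = x \mid \prlzvec{\cA}] = \prior(V_x)/\prior(V)$, one obtains
\[
\diff{EC}{\prlzvec{\cA}}{t} \;=\; w(E(V)) - \frac{1}{\prior(V)}\sum_x \prior(V_x)\, w(E(V_x)).
\]

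By induction, it suffices to verify adaptive submodularity in the one-step base case $\prlzvec{\cB} = \prlzvec{\cA} \cup \set{(\test',x')}$; the subcase $\test' = t$ is trivial since then $\diff{EC}{\prlzvec{\cB}}{t} = 0$, so assume $\test' \neq t$. Writing $W := V \cap \set{h : h(\test') = x'}$ and $W_x := W \cap V_x$, the target inequality reduces to
\[
w(E(V)) - \frac{1}{\prior(V)}\sum_x \prior(V_x)\, w(E(V_x)) \;\ge\; w(E(W)) - \frac{1}{\prior(W)}\sum_x \prior(W_x)\, w(E(W_x)).
\]

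To verify this, I would apply the identity $w(E(S)) = \tfrac{1}{2}\paren{\prior(S)^2 - \sum_i P_i(S)^2}$, where $P_i(S) := \prior(S \cap \H_i)$; this follows from $w(\set{h,h'}) = \prior(h)\prior(h')$ summed over pairs $\set{h,h'}$ in distinct equivalence classes. Refining the partition of $V$ via $V_{x,x''} := V \cap \set{h : h(t) = x,\ h(\test') = x''}$, every quantity in the inequality becomes a polynomial in the nonnegative values $\set{P_i(V_{x,x''}) : i, x, x''}$; clearing denominators by multiplying through by $\prior(V)\prior(W) > 0$ turns the claim into a polynomial inequality in these quantities.

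The main obstacle is verifying this polynomial inequality. I expect the difference of the two sides to decompose as a nonnegative combination of monomials of the form $P_i(V_{x,x''})\,P_j(V_{x',x'''})$ with $i \neq j$, admitting a probabilistic interpretation as the expected extra weight of cross-class edges that $t$ cuts when starting from $V$ versus from $W \subseteq V$. The rationality hypothesis on $\prior$ is likely a technical convenience: by cloning each hypothesis $h$ into $N \cdot \prior(h)$ identical copies (for $N$ a common denominator), one reduces to a uniform prior with integer-valued class counts, where the polynomial inequality becomes a combinatorial counting statement amenable to direct verification.
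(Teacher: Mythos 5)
Your setup is correct and in places algebraically cleaner than the paper's: the closed form $\diff{EC}{\prlzvec{\cA}}{\test} = \w(E(V)) - \frac{1}{\prior(V)}\sum_x \prior(V_x)\,\w(E(V_x))$, the identity $\w(E(S)) = \frac{1}{2}\bigl(\prior(S)^2 - \sum_i \prior(S\cap\H_i)^2\bigr)$, the reduction to one-step extensions, and the cloning reduction from rational priors to a uniform prior (which is exactly the paper's reduction) are all fine. But there is a genuine gap at precisely the point you label ``the main obstacle'': you never verify the polynomial inequality, you only conjecture that the difference of the two sides decomposes into a nonnegative combination of cross-class monomials. That conjecture is the entire content of the lemma, and it does not follow from a simple edge-by-edge coupling: after clearing denominators the inequality has degree four, and the problematic contribution comes from edges $\set{h,h'}$ whose endpoints \emph{agree} with each other at $\test$. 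Such an edge is cut only when the realized outcome disagrees with both endpoints, and the conditional probability of that event can move in either direction as the version space shrinks from $V$ to $W$, so the ``extra weight cut'' interpretation is not monotone edge by edge.

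The paper's proof is devoted almost entirely to this verification. It writes the marginal gain as $\theta(\mathbf{n},\mathbf{e}) = \frac{1}{2}\sum_{i\neq j}\sum_{a\neq b} n_{i,a}\,n_{j,b} + \sum_a e_a\paren{1 - n_a/n_{\cV}}$ in terms of class-by-outcome counts $n_{i,a}$ and agreeing-edge counts $e_a$, joins the two partial realizations by a coordinatewise nondecreasing path along which the constraint $e_a = \frac{1}{2}\sum_{i\neq j}\sum_{b\neq a} n_{i,b}\,n_{j,b}$ is maintained, and shows $\partial\theta/\partial e_a \ge 0$ and $\partial\theta/\partial n_{k,c}\ge 0$; the latter requires a nontrivial chain of estimates bounding $n_{\cV}\, e_c$ by $n_{\cV}^2\sum_{j\neq k,\, b\neq c} n_{j,b}$. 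Some argument of this strength --- or an explicit sum-of-nonnegative-terms certificate for your degree-four polynomial --- is required to close your proof; as written, the crucial step is asserted rather than proved.
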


\begin{proof}
We first prove the result assuming a uniform prior $\prior(\cdot)$,
and then show how to reduce the general prior case to the uniform
prior case.  Hence all edges have weight $1/n^2$, where there are $n$
hypotheses.  For convenience, we also rescale our units of reward so that 
all edges have unit weight.  (Note that $f$ is adaptive submodular iff
$c f$ is for any constant $c > 0$.) 
To prove adaptive submodularity, we must show that for all 
$\prlzvec{\cA} \prec \prlzvec{\cB}$ and $\test \in \tests$, we have 
$\diff{}{\prlzvec{\cB}}{\test} \le \diff{}{\prlzvec{\cA}}{\test}$.
Fix $\test$ and $\prlzvec{\cA}$, and let $\vs(\prlzvec{\cA}) :=  \set{h : \prior(h \mid \prlzvec{\cA}) > 0}$ denote the version space, if
$\prlzvec{\cA}$ encodes the observed outcomes.  
Let $\nvs := |\vs(\prlzvec{\cA})|$ be the number of 
hypotheses in the version space.
Likewise, let 
$n_{i,a}(\prlzvec{\cA}) := |\set{h : h \in \vs(\prlzvec{\cA}, \testvar_{\test} = a) \cap
  \H_i}|$, and let $n_a(\prlzvec{\cA}) := \sum_{i=1}^\nclass n_{i,a}(\prlzvec{\cA})$.
We define a function $\theta$ of the quantities $\set{n_{i,a} : 1 \le
  i \le \nclass, a \in \outcomes}$
 such that 
$\diff{}{\prlzvec{\cA}}{\test} = \theta(\vectorize{n}(\prlzvec{\cA}))$, where 
$\vectorize{n}(\prlzvec{\cA})$ is the vector consisting of $n_{i,a}(\prlzvec{\cA})$ for all
$i$ and $a$.  For brevity, we suppress the dependence
of $\prlzvec{\cA}$ where it is unambiguous.

It will be convenient to define $e_{a}$ to be the number of
edges cut by $\test$ such that at $\test$ both hypotheses agree with each other but disagree with the realized
hypothesis $\htrue$, conditioning on $\testvar_{\test} = a$.
Written as a function of $\vectorize{n}$, we have  
$e_a(\vectorize{n}) := \sum_{i < j} \sum_{b \neq a} n_{i,b} \cdot n_{j, b}$.

As we will explain below, given this expression for $e_a$ we can define $\theta$ as
\begin{equation}
  \label{eq:class3}
  \theta(\vectorize{n}) := \sum_{i < j} \sum_{a \neq b}
n_{i,a} \cdot n_{j, b} + \sum_{a} e_{a} \paren{\frac{n_{a}}{\nvs}}
\end{equation}
Here, $i$ and $j$ range over all class indices, and $a$ and $b$
range over all possible outcomes of test $\test$.
The first term on the right-hand side counts the number of edges that
will be cut by selecting test $\test$ no matter what the outcome of $\test$
is.  Such edges consist of hypotheses that disagree with each other at
$\test$ and, as with all edges, lie in
different classes.
The second term counts the expected number of edges cut by $\test$ 
consisting of hypotheses that agree with each other at $\test$.
Such edges will be cut by $\test$ iff they disagree with $\rlz^*$ at $\test$.
The edges $\set{h, h'}$ with $h, h' \in \vs(\prlzvec{\cA})$ and
$\prior(\testvar_{\test} = b \mid h) = \prior(\testvar_{\test} = b
\mid h') = 1$ for some $b \neq a$
 (of which there are a total of $e_{a}$) will be cut by $\test$ iff $\testvar_{\test} =
a$.  Since we assume a uniform prior, $\prob{\testvar_{\test} = a \ \mid \
  \prlzvec{\cA}} = n_a / \nvs$ for any partial realization $\prlzvec{\cA}$ with $\test \notin \cA$, hence the expected contribution of these edges to $\diff{}{\prlzvec{\cA}}{\test}$
is $\sum_a e_a\paren{n_a/ \nvs}$, from whence we get the second term.

Now fix $\prlzvec{\cB} \succ \prlzvec{\cA}$.
Our strategy for proving  $\diff{}{\prlzvec{\cB}}{\test} \le \diff{}{\prlzvec{\cA}}{\test}$
is as follows.  As more observations are made, the version space can
only shrink, i.e. $\vs(\prlzvec{\cB}) \subseteq \vs(\prlzvec{\cA})$.  This means that
for all $i$ and $a$, $n_{i,a}$ is nonincreasing, i.e.,
$n_{i,a}(\prlzvec{\cB}) \le n_{i,a}(\prlzvec{\cA})$.
Hence we consider a parameterized path $p(\tau)$ in $\reals^{\nclass \nout}$
from $p(0) := \vectorize{n}(\prlzvec{\cB})$ to $p(1) := \vectorize{n}(\prlzvec{\cA})$.
Then by integrating along the path we obtain 
\begin{equation}
  \label{eq:class4}
  \diff{}{\prlzvec{\cA}}{\test} -\diff{}{\prlzvec{\cB}}{\test} = \int_{\tau =
    0}^{1} \paren{\frac{d (\theta \! \circ\! p)}{d\tau}} d\tau \mbox{.}
\end{equation}
We require that 
$p$ is nondecreasing in each coordinate as a function of $\tau$; in
other words, that
$\partial n_{i,a} / \partial \tau \ge 0$ for all classes $i$, outcomes $a$, and $\tau \in [0,1]$.
There exists a path meeting this requirement, since 
$\vectorize{n}(\prlzvec{\cB}) \le \vectorize{n}(\prlzvec{\cA})$.
Hence we can prove the integral is nonnegative by applying the
chain rule for the derivative to obtain
$$\frac{d (\theta \! \circ \! p)}{d\tau} \ = \ \ \sum_{i,a} \frac{\partial \theta}{\partial
  n_{i,a}}  \frac{\partial n_{i,a}}{\partial \tau}  $$
and then proving that  
$\partial \theta / \partial n_{i,a} \ge 0$
for all $i$ and $a$.

Fix a class index $k$ and an outcome $c$ and consider $\partial \theta / \partial n_{k,c}$.
As we prove in Lemma~\ref{lem:theta-derivative}, elementary calculus tells us that 
\begin{equation}
  \label{eq:class5}
\frac{\partial \theta}{\partial n_{k,c}} \ = \ 
 \frac{e_c}{\nvs} \ \ + \ \
\sum_{i \neq k, \ a \neq c} \frac{n_a n_{i,c}}{\nvs} \ \ + \ \ \sum_{i \neq k, \ a \neq c} n_{i,a} \ \ 
 - \ \ \sum_{b} \frac{e_bn_b }{\nvs^2}. 
\end{equation}
Multiplying~\eqnref{eq:class5} by $\nvs$, we see that this quantity is nonnegative iff 
\begin{equation}
  \label{eq:class6}
\sum_{b} \frac{e_bn_b }{\nvs}  \ \ \le \ \  e_c \ + \ \sum_{i \neq k,
  \ a \neq c} n_a n_{i,c} \ + \ 
\nvs \sum_{i \neq k, \ a \neq c}
n_{i,a} \
\end{equation}

We prove~\eqnref{eq:class6} via the following sequence of equations,
which are explained below.
\begin{eqnarray}
\label{eq:s1} 
\sum_{b} \frac{e_bn_b }{\nvs} & = & \frac{e_cn_c }{\nvs} + \sum_{b
  \neq c} \frac{e_bn_b }{\nvs} \\ \label{eq:s2}
 & \le & e_c + \sum_{b  \neq c} \frac{e_bn_b }{\nvs} \\   \label{eq:s3}
 & = & e_c + \sum_{b  \neq c} \frac{n_b }{\nvs} \paren{ \sum_{i < j} \sum_{a \neq b}
n_{i,a} \cdot n_{j, a} } \\  \label{eq:s4}
 & = & e_c + \sum_{b  \neq c} \frac{n_b }{\nvs} \paren{ \sum_{i < j}
   n_{i,c} n_{j, c} +  \sum_{i < j}\sum_{a \neq b, a \neq c} 
n_{i,a} \cdot n_{j, a} } \\ \label{eq:s5}
 & \le & e_c + \sum_{b  \neq c} \frac{n_b }{\nvs} \paren{ \paren{\sum_{i \neq
   k} n_{i,c} } n_{c} }  +  \sum_{b  \neq c} \frac{n_b
}{\nvs} \paren{\nvs \sum_{ i \neq k, a \neq c} n_{i,a} } \\ \label{eq:s6}
 & = & e_c + \paren{\sum_{b  \neq c} n_b\frac{ n_{c}}{\nvs}} \paren{\sum_{i \neq
   k} n_{i,c} }   +  \paren{\sum_{b \neq c} \frac{n_b
}{\nvs}} \paren{\nvs \sum_{ i \neq k, a \neq c} n_{i,a}
} \\ \label{eq:s7} 
& \le & e_c + \paren{\sum_{b  \neq c} n_b} \paren{\sum_{i \neq
   k} n_{i,c} }   +  \nvs \sum_{ i \neq k, a \neq c} n_{i,a}  \\ \label{eq:s8} 
& = & e_c + \sum_{a  \neq c, i \neq k} n_a n_{i,c}    +  \nvs \sum_{ i \neq k, a \neq c} n_{i,a}  
 \end{eqnarray}

The first four equations are straightforward, separating out terms for
outcome $c$ in \eqnref{eq:s1} and \eqnref{eq:s4}, using $n_c / \nvs
\le 1$ in \eqnref{eq:s2} and the definition of $e_b$ in \eqnref{eq:s3}.
For \eqnref{eq:s5} we first make a simple observation:
Let $\set{x_i}_{i \ge 0}$ be a finite sequence of non-negative real
numbers.  Then for any $k$, $\sum_{i < j} x_i x_j \le \paren{\sum_i x_i} \paren{\sum_{i \neq k} x_i}$.
We use this fact twice in \eqnref{eq:s5}.
First, we apply it with $x_i = n_{i,c}$, so that after noting
$n_c = \sum_i x_i$, we conclude $\sum_{i < j} n_{i,c} n_{j,c} \le \paren{\sum_{i \neq k} n_{i,c}}n_c$.
Second, we apply it with $x_i = n_{i,a}$ for each $a \notin \set{b,
  c}$, and conclude 
$$\sum_{i < j} \sum_{a \neq b, a \neq c}  n_{i,a} n_{j,a} \le \sum_{a
  \neq b, a \neq c} \paren{\sum_{i \neq k} n_{i,a}}n_a \le \nvs \sum_{a
  \neq b, a \neq c} \paren{\sum_{i \neq k} n_{i,a}}  \le \nvs \sum_{a \neq c} \paren{\sum_{i \neq k} n_{i,a}}.$$
For~\eqnref{eq:s6} and~\eqnref{eq:s8} we merely rearrange terms for
the reader's convenience.  In~\eqnref{eq:s7}
we simply use $n_c / \nvs \le 1$ and $\sum_{b \neq c} n_b / \nvs \le 1$.
From this we conclude that $\frac{\partial \theta}{\partial n_{k,c}}
\ge 0$ for all $k, c$ and $\tau$, and hence that 
$\diff{}{\prlzvec{\cB}}{\test} \le \diff{}{\prlzvec{\cA}}{\test}$, and
so $f$ is adaptive submodular under a uniform prior.

We next show how to reduce the general prior case to the uniform prior
case.  Fix any prior $\prior$ with rational probabilities,
i.e. $\prior(h) \in \rationals$ for all $h$.  Then there exists $d \in
\nats$ and function $k:\hypotheses \to \nats$ such that 
such that $\prior(h) = k(h)/d$.  Create a new instance containing $d$
hypotheses, where for each $h  \in \hypotheses$ there are $k(h)$ copies of $h$, denoted by $h^1, \ldots, h^{k(h)}$.
Each copy of $h$ induces the same conditional distribution of test
outcomes $\prior(\testvar_{1}, \ldots, \testvar_{\ntests} \mid h)$.
All copies of $h$ belong to the same class, and copies of $h$
and $h'$ belong to the same class iff $h$ and $h'$ do.
Finally, assign a uniform prior to this new instance.
Then the adaptive submodularity of $f$ on this new instance implies
the adaptive submodularity on the original instance, if the weight of 
edge $\set{h, h'}$ in the original instance is proportional to the
number of edges between the copies of $h$ and the copies of $h'$ in the new instance.
That is, it suffices to set $\w(\set{h, h'}) \propto k(h) \cdot
k(h')$, and our choice of weight function, $\w(\set{h, h'}) := \prior(h) \cdot
\prior(h')$, satisfies this condition.
\end{proof}

\begin{lemma}  \label{lem:theta-derivative}
The partial derivatives of $\theta$ are given by 
$$ \frac{\partial \theta}{\partial n_{k,c}} \ = \ 
 \frac{e_c}{\nvs} \ \ + \ \
\sum_{i \neq k, \ a \neq c} \frac{n_a n_{i,c}}{\nvs} \ \ + \ \ \sum_{i \neq k, \ a \neq c} n_{i,a} \ \ 
 - \ \ \sum_{b} \frac{e_bn_b }{\nvs^2} $$
\end{lemma}

\begin{proof}
Recall $$\theta(\vectorize{n}) := \sum_{i < j} \sum_{a \neq b}
n_{i,a} \cdot n_{j, b} + \sum_{a} \frac{e_{a} n_{a}}{\nvs}.$$
The partial derivative of the first term is relatively straightforward:
$$
\frac{\partial }{\partial n_{k,c}} \paren{ \sum_{i < j} \sum_{a \neq b}
n_{i,a} \cdot n_{j, b} } = \sum_{i \neq k, a \neq c} n_{i,a}.
$$
The partial derivative of the second term is:
\begin{eqnarray}
\frac{\partial}{\partial n_{k, c}} \paren{ \sum_{a} \frac{e_a n_a}{\nvs} } 
& = & \sum_{a} \frac{\partial}{\partial n_{k, c}} \paren{ \frac{e_a n_a}{\nvs} } \\
& = &
\frac{\partial}{\partial n_{k, c}} \paren{\frac{e_c n_c}{\nvs} } + 
 \sum_{a\neq c}  \frac{\partial}{\partial n_{k,
    c}} \paren{\frac{e_a n_a}{\nvs} } \\
& = &
 \frac{n_c}{\nvs} \cdot  
\underbrace{\frac{\partial e_c }{ \partial n_{k, c} }}_{=\ 0} 
\ + \ \frac{e_c}{\nvs} \cdot 
\underbrace{\frac{\partial n_c }{ \partial n_{k, c} } 
}_{= \ 1}
\ + \ e_c n_c \cdot \frac{\partial }{ \partial n_{k, c}
} \paren{\frac{1}{\nvs}}  \nonumber \\
& & + \ 
 \sum_{a\neq c} \left\{ %
\frac{n_a}{\nvs} \cdot \hspace{-3mm}
\underbrace{ \frac{\partial e_a }{ \partial n_{k, c} }  }_{= \ \sum_{i
    \neq k} n_{i,c}} \!
 + \ \frac{e_a}{\nvs} \cdot  \underbrace{
\frac{\partial n_a }{ \partial n_{k, c} } 
}_{= \ 0}
\ + \ e_a n_a \frac{\partial (1/\nvs) }{ \partial n_{k, c} } 
 \right\}  \\
  & = &
\frac{e_c}{\nvs} - \frac{ e_c n_c}{ \nvs^2 }
+  \sum_{a\neq c} \left\{ 
\sum_{i \neq k} n_{i,c} \left( \frac{n_a}{\nvs} \right)
- \frac{ e_a n_a}{ \nvs^2 }
 \right\}  \\
  & = &
\sum_{i \neq k, a\neq c} \frac{n_a n_{i,c} }{\nvs}
- \sum_{b} \frac{ e_b n_b}{ \nvs^2 } + \frac{e_c}{\nvs}
\end{eqnarray}
and thus
\begin{eqnarray}
\frac{\partial \theta}{\partial n_{k, c} }  = 
\sum_{i\neq k, a \neq c} n_{i,a} + \sum_{i \neq k, a\neq c} \frac{n_a
  n_{i,c} }{\nvs} + \frac{ e_c  } {\nvs } 
- \sum_{b} \frac{ e_b n_b  } {\nvs ^ 2 }. 
\end{eqnarray}
\end{proof}

\ignore{
\begin{proofof}{\thmref{thm:noisy-case}}
We prove \thmref{thm:noisy-case} via a reduction to the \ECD problem and
a subsequent application of \thmref{thm:equiv-class-determination}.
Fix $\delta > 0$ and a prior $\prior$ over 
$\hypotheses \times \outcomes^{\tests}$ such that 
$\prior\paren{\obs \mid \hvar = h} \ge \delta$.
for all $(h, \obs)$ in the support of $\prior$.
Construct an \ECD instance with a hypotheses set equal to
$\support(\prior)$, tests $\tests$, 
equivalence classes 
$\H_{i} = \set{(h_i, \obs) : \prior(\obs \mid h_i) > 0}$ for each $h_i \in
\hypotheses$, and prior probability $\prior$.
Note that for all $(h, \obs) \in \support(\prior)$ we have 
$\prior(h, \obs) =  \prior\paren{\obs | h} \cdot \prior\paren{h} \ge \delta \cdot
\prior\paren{h} \ge \delta \cdot p_{\min}$.
Hence we can apply \thmref{thm:equiv-class-determination} with
minimum--probability parameter
$\delta \cdot p_{\min}$ to obtain the claimed approximation
factor.  In the case of uniform costs, applying the prior
perturbation technique of Kosaraju~et~al.~\cite{kosaraju99} yields 
an $\cO(\log |\support(\prior)|)$ approximation. (See also the
discussion of this perturbation technique in \S$9$ of~\cite{golovin10adaptive}).
We claim that $|\support(\prior)| \le |\hypotheses|/\delta$ to get the
claimed $\cO(\log |\hypotheses|/\delta)$ factor.
This holds simply
because for each $h$ we have $|\set{(h, \obs) \in \support(\prior)}|
\le 1/\delta$, by virtue of the two facts that $\prior\!\paren{\obs \mid h} \ge \delta$ for all
$\obs$ such that $(h, \obs)\in \support(\prior)$, and 
$\sum_{\obs} \prior\!\paren{\obs \mid h} \le 1$.  
\end{proofof}
} %

\section{A Bad Example for the Info-Gain and Value of
  Information Criteria}
\label{sec:infogain-stinks}
\newcommand{\binary}[2]{\phi_{#1}\paren{#2}}
\newcommand{\testlin}[0]{\test^{\text{seq}}}
\newcommand{\testdum}[0]{\test^{\text{dumb}}}
\newcommand{\testbin}[0]{\test^{\text{bin}}}
\newcommand{\deltalin}[0]{\Delta^{\text{seq}}}
\newcommand{\deltabin}[0]{\Delta^{\text{bin}}}

A popular heuristic for the \ODT problem are to adaptively greedily
select the test that maximizes the 
\emph{information gain} in the distribution over hypotheses, conditioned on all previous test outcomes.
The same heuristic can be applied to the \ECD problem, in which we compute the information gain with respect to the entropy of the distribution over \emph{classes} rather than hypotheses.
Let $\igpolicy$ denote the resulting policy for \ECD.

Another common heuristic for \ODT is to adaptively greedily 
select the test maximizing the \emph{Bayesian decision-theoretic value
  of information} (VoI) criterion.
Recall the value of information of a test $\test$ is the expected
reduction in the expected risk of the minimum risk decision, where the
risk is the expected loss.  
Formally, consider the Bayesian
decision-theoretic setup described in \secref{sec:noise}.  
The VoI criterion myopically selects test to maximize 
$$\diff{VoI}{\prlzvec{\cA}}{\test} := \min_{\dec}
\expctover{\hvar}{\loss(\dec,\hvar)\mid \prlzvec{\cA}} - 
\expctover{\testval_{\test} \sim \testvar_{\test}\mid
  \prlzvec{\cA}}
{\min_{\dec} \expctover{\hvar}{\loss(\dec,\hvar)\mid \prlzvec{\cA}, \testval_{\test}}}.$$
This heuristic can be also be applied to the \ECD problem, by taking 
the decision set $\decset$ to be the set of equivalence classes, and 
the loss function to be the $0$--$1$ classification loss function, i.e., 
$\loss(\dec, \hvar) = \indicator{\hvar \notin \dec}$.
Let $\voipolicy$ denote the resulting policy.

In this section we present a family of \ECD instances for which both $\igpolicy$
and $\voipolicy$ perform significantly worse than the optimal policy.%

\begin{theorem} \label{thm:info-gain-stinks}
  There exists a family of \ECD instances with uniform priors such
  that $\cost(\igpolicy) = \Omega\paren{n / \log (n)}
  \cost(\policy^*)$ and $\cost(\voipolicy) = \Omega\paren{n / \log (n)}
  \cost(\policy^*)$, where $n$ is the number of hypotheses and
  $\policy^*$ is an optimal policy.
\end{theorem}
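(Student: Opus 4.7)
The plan is to construct a single family of \ECD instances with uniform priors on which both $\igpolicy$ and $\voipolicy$ incur expected cost $\Omega(n)$, while the optimum uses $O(\log n)$ tests.

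\emph{Construction.} Let $n = 2^k$ and $\hypotheses = \set{h_0, \dots, h_{n-1}}$ under the uniform prior, where each $h_i$ is identified with the $k$-bit binary expansion of $i$. Define the two equivalence classes by parity of the index: $\H_0$ consists of the even-parity hypotheses and $\H_1$ of the odd-parity ones, so $|\H_0|=|\H_1|=n/2$. Include two families of unit-cost tests: $k$ \emph{bit tests} $B_1,\dots,B_k$, with $B_l(h_i)$ equal to the $l$-th bit of $i$, and $n$ \emph{singleton tests} $S_0,\dots,S_{n-1}$ with $S_i(h_j)=\indicatorset{i=j}$. The optimal policy runs the $k$ bit tests to recover the index and hence the class, so $\cost(\policy^*)\le\log_2 n$.

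\emph{Why $\voipolicy$ fails} (with $0$--$1$ loss and decision set $\decset=\set{\H_0,\H_1}$). I would prove by induction the following invariant: whenever the current version space has the form $V=\hypotheses\setminus T$ for a set $T$ of previously tested singletons contained in a single class, every bit test has VoI exactly $0$, every minority-class singleton in $V$ has VoI $1/|V|$, and every majority-class singleton in $V$ has VoI $0$. The zero VoI of $B_l$ reduces to the telescoping identity
\[
\sum_{v\in\set{0,1}}\frac{|V_v|}{|V|}\cdot\frac{n/4-|T\cap\set{b_l=v}|}{n/2-|T\cap\set{b_l=v}|}\;=\;\frac{n/2-|T|}{n-|T|},
\]
i.e.\ the expected posterior min-risk after $B_l$ equals the prior min-risk, since each bit test cuts each class in half and the residual imbalances cancel. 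Starting from the uniform prior, $\voipolicy$ picks some singleton (WLOG in $\H_0$), after which the invariant applies. Thus $\voipolicy$ keeps testing $\H_0$-singletons until some $S_i=1$ (class identified) or all $n/2$ of them have returned $0$ (forcing $V\subseteq\H_1$). Conditioning on which class $\htrue$ lies in gives $\expct{\cost(\voipolicy)}=\Theta(n)$, hence ratio $\Omega(n/\log n)$.

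\emph{Why $\igpolicy$ fails.} The same construction works, but more care is needed because a bit test's information gain is only approximately zero --- it is proportional to the variance across $v\in\set{0,1}$ of the conditional class probability in $V\cap\set{b_l=v}$. I would fix a canonical tie-breaking rule (e.g.\ lexicographic within the current minority class), and verify by induction the invariant that for every bit $l$ the tested-singleton bit counts $|T\cap\set{b_l=0}|$ and $|T\cap\set{b_l=1}|$ differ by at most $O(1)$ throughout greedy's execution --- this is inherited from the fact that the elements of $\H_0$, listed in lexicographic order, form a sequence whose bit-$l$ prefix sums stay balanced to within $O(1)$. Under the invariant each bit test has info gain $O(1/n^2)$, while the best minority-class singleton has info gain $\Theta(1/|V|)=\Omega(1/n)$, so $\igpolicy$ also tests all minority-class singletons before terminating, yielding the same $\Omega(n/\log n)$ ratio.

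\emph{Main obstacle.} The VoI bound is clean thanks to the exact telescoping. The delicate part is the info gain argument: bit tests are only \emph{approximately} uninformative, and one must engineer the tie-breaking so that the bit-count balance invariant is preserved throughout greedy's run. Once that invariant is in place, both analyses proceed uniformly via comparison of singleton info gain $\Theta(1/|V|)$ against bit-test info gain $O(1/n^2)$.
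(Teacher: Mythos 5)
Your construction is genuinely different from the paper's. The paper builds an instance with $2^q$ classes of two hypotheses each, one test that reveals a hidden ``noise bit'', binary-search tests that are \emph{individually} uninformative about the class until that bit is known, infinitely many dummy tests, and a slow sequential search; with random tie-breaking, any \emph{posterior-based} greedy policy (a family covering both $\igpolicy$ and $\voipolicy$) either loops forever on dummies or does the sequential search, and a single argument handles both heuristics at once. You instead use two parity classes, bit tests, and singleton tests. Your VoI half is essentially sound: the telescoping identity holds exactly for every bit test regardless of how the removed singletons $T$ are distributed over the bit-cells (each cell's minority class is always the depleted one, so the min commutes with the expectation), minority singletons have VoI exactly $1/|V|$, and the $\Theta(n)$ lower bound follows. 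The price of your route, relative to the paper's, is that your lower bound holds only under a designated tie-breaking rule among the (exactly tied) singletons, whereas the paper's dummy-test device makes the argument robust to random tie-breaking.

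The information-gain half, however, has a concrete gap: the claimed invariant is false for lexicographic tie-breaking. List the even-parity indices of $\set{0,\dots,n-1}$ in increasing order; the first $n/4$ of them all have most significant bit equal to $0$. Hence after $s$ minority singletons have been eliminated, the imbalance of $T$ on the top bit is $\delta = s$, not $O(1)$. A bit test with imbalance $\delta$ has class-information gain $\Theta(\delta^2/n^2)$, which already matches the singleton gain $\Theta(1/n)$ once $s = \Theta(\sqrt{n})$; beyond that point $\igpolicy$ prefers the top-bit test, halves the version space, and your accounting no longer forces cost $\Omega(n)$ (naively one only gets $\Omega(\sqrt{n})$). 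The fix is to choose a different canonical order: for even $k$ the complement map $x \mapsto \bar{x}$ preserves even parity, so ordering $\H_0$ as complementary pairs $\set{x,\bar x}$ keeps every bit's prefix imbalance at most $1$, making every bit test's gain $O(1/n^2)$ throughout and restoring the $\Omega(n/\log n)$ ratio (for odd $k$ one needs another low-discrepancy ordering of $\H_0$). As written, though, the stated invariant and its justification do not hold, so the $\igpolicy$ bound is not established.
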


In fact, we will prove a lower bound for each policy within a large family of adaptive greedy
policies which contains $\igpolicy$ and $\voipolicy$, which we call
\emph{posterior--based}.
Informally, this family consists of all greedily policies that use
only information about the posterior equivalence
class distribution to select the next test. 
More precisely, these policies define a potential function $\pot$ which maps
distributions of distributions over equivalence classes to real
numbers, and at each time step select the test $\test$ which maximizes
the $\pot$ of the posterior distribution (over test outcomes $\outcome_{\test}$)
of the posterior distribution over equivalence classes generated by
adding  $\outcome_{\test}$ to the previously seen test outcomes.
In the event of a tie, we select any test maximizing this quantity at
random.  The information gain policy is posterior--based; $\pot$ is simply
$-1$ times the expected entropy of the posterior equivalence-class distribution.
Likewise, the value of information policy is also posterior--based; 
$\pot$ is simply
$-1$ times the expected loss of the best action for the posterior equivalence-class distribution.
Hence to prove \thmref{thm:info-gain-stinks} it suffices to prove the
following more general theorem.

\begin{theorem} \label{thm:potential-based-policies-stink}
  There exists a family of \ECD instances with uniform priors such 
  that $\cost(\policy) = \Omega\paren{n / \log (n)} \cost(\policy^*)$
  for any posterior--based policy $\policy$,  where $n$ is the number
  of hypotheses and
  $\policy^*$ is an optimal policy.
\end{theorem}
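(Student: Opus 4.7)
The plan is to construct a single family of \ECD instances, parametrized by $n$, on which the ratio $\cost(\policy)/\cost(\policy^*)$ is $\Omega(n/\log n)$ for every posterior--based policy $\policy$. The core strategy is to engineer sufficient symmetry in the instance that, no matter what potential function $\pot$ is used, the policy is forced under its own tie-breaking rule to select tests that make negligible progress toward classifying the realized hypothesis.

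For the instance, take $n = 2^k$ hypotheses indexed by $k$-bit strings under the uniform prior, with two equivalence classes $\H_0$ and $\H_1$ of size $n/2$ defined by the parity (XOR) of the index bits. The test set comprises two families: (i) $k = \log n$ \emph{bit tests} $\testbin_1,\ldots,\testbin_k$ where $\testbin_j(h)$ returns the $j$-th bit of $h$'s index, and (ii) $n-1$ \emph{dummy tests} $\testdum_i$ with $\testdum_i(h) := \indicator{h = h_i}$, each able to eliminate at most one hypothesis per observation. The $k$ bit tests together uniquely identify $h$ and hence its class, so the policy that runs them in any order is feasible and witnesses $\cost(\policy^*) \le \log n$.

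The key structural fact is that bit tests carry no information about the class: conditioning on any subset of the index bits leaves the parity of the remaining bits uniform, so the posterior class distribution remains $(1/2,1/2)$. Each $\testbin_j$ therefore induces a Dirac distribution over posterior class distributions centered at the current posterior, and any $\pot$ assigns it the value of $\pot$ on the current posterior. Dummy tests, on the other hand, induce nontrivial distributions: with probability $1/n$ they pin the class, and with probability $(n-1)/n$ they slightly perturb the class posterior. For any $\pot$ strictly convex in the class posterior (covering both the information-gain and VoI potentials), Jensen's inequality makes the expected $\pot$ strictly larger than $\pot$ of the current posterior, so such policies strictly prefer dummies to bit tests. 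Moreover, symmetry of the instance under permutations of $\{h_1,\ldots,h_{n-1}\}$ combined with a class relabeling causes all dummies to induce the same unordered distribution over posteriors, so they tie for any label-symmetric $\pot$. Along the resulting trajectory $\testdum_{i_1},\testdum_{i_2},\ldots$, conditioning on $\hvar = h_{i_j}$ for $j \le n-1$ gives cost $j$ when $\testdum_{i_j}$ returns $1$, while conditioning on $\hvar = h_n$ forces the policy to rule out all $n-1$ other hypotheses at cost $n-1$; averaging under the uniform prior gives expected cost $\Theta(n)$, whence the $\Omega(n/\log n)$ ratio.

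The main obstacle is maintaining the bit-versus-dummy comparison throughout the execution rather than only under the initial uniform posterior: after a dummy-zero observation the version space shrinks by one, so bit tests can in principle convey a tiny amount of class information on the skewed residual posterior. This requires an inductive symmetry argument showing that the near-uniform posterior on the version space keeps bit tests $\pot$-dominated by dummies at every step by a constant margin. A secondary subtlety is handling exotic $\pot$ that actually prefer trivial (Dirac) distributions, for which the Jensen argument fails; there one would instead rely on a coupon-collector-style analysis that under uniform tie-breaking over all remaining $n-1+k$ tests, the expected number of rounds before all $k$ bit tests have been picked is still $\Theta(n)$, giving the same $\Omega(n/\log n)$ bound.
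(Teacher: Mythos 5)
There is a genuine gap, and it is fatal to the construction as stated: the theorem quantifies over \emph{every} posterior--based policy, i.e., every potential $\pot$ on (distributions over) class posteriors, but your instance is only defeated by potentials that prefer spread-out posterior distributions. In your instance the bit tests induce (initially) a Dirac at the current class posterior $(1/2,1/2)$, while the identification tests induce a nontrivial mixture. An adversarially chosen $\pot$ that ranks the Dirac \emph{above} that mixture --- which is a perfectly legitimate posterior--based policy under the definition --- will strictly prefer the bit tests, run all $\log n$ of them, identify $h$ and hence its parity, and terminate at cost $O(\log n) = O(\cost(\policy^*))$. Your ``secondary subtlety'' only patches the case of exact ties (where a coupon-collector argument over uniform tie-breaking would indeed give $\Theta(n)$); it does not cover strict preference for the Dirac-inducing tests, and no tie-breaking argument can, because nothing forces a tie. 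Your ``main obstacle'' (bit tests acquiring class information after the version space becomes asymmetric) compounds this: once they carry information, a Jensen-style $\pot$ may switch to preferring them and finish quickly, so even the convex-$\pot$ branch of your argument is not closed.

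The paper's construction is built precisely to evade this adversary. There, the class-revealing information is split across a \emph{pair} of tests ($\test_0$ together with a binary-search test $\test_k$), so that no single useful test ever perturbs the class posterior; each such test induces \emph{exactly} the same object --- a point mass at the current posterior --- as each of \emph{infinitely many} genuine dummy tests that always output $0$. Any $\pot$ whatsoever must therefore value the useful tests and the dummies identically, and the uniform tie-breaking rule then picks a dummy with probability one, trapping the policy in a loop unless it instead prefers the linear-search tests, which cost $\Theta(n)$. To repair your proof you would need an analogous device: make the cheap identifying tests indistinguishable, \emph{as induced distributions over class posteriors}, from an overwhelming supply of truly useless tests at every step of the execution --- which your single-family-of-bit-tests-plus-indicators instance does not do.
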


\begin{proof}
Fix integer parameter $q \ge 1$.
There are $\nclass = 2^q$ classes $\H_{a}$ for each 
$1 \le a \le 2^q$.  Each $\H_{a}$ consists of two hypotheses, $h_{a,0}$ and $h_{a,1}$.
We call $a$ the \emph{index} of $\H_{a}$.
The prior is uniform over the hypotheses $\hypotheses = \set{\hypothesis_{a,v} : 1 \le a \le \nclass, 0 \le v \le 1}$.
There are four types of tests, all with binary outcomes and all of unit cost.
There is only one test of the first type, $\test_0$, which tells us the value of $v$ in the realized hypothesis $\htrue_{a,v}$.
Hence for all $a$, $\hvar = \hypothesis_{a,v} \implies X_{\test_0} = v$.
Tests of the second type are designed to help us quickly discover the index of the realized class via binary search
if we have already run $\test_0$, but to offer no information gain whatsoever if $\test_0$ has not yet been run.
There is one such test $\test_k$ for all $t$ with $1 \le k \le q$.
For $z \in \nats$, let $\binary{k}{z}$ denote the $k^{\text{th}}$ least-significant bit of the binary encoding of $z$, so that $z = \sum_{k = 1}^{\infty} 2^{k-1} \binary{k}{z}$.
Then for each $\hypothesis_{a,v}$ we have 
$\hvar = \hypothesis_{a,v} \implies X_{\test_k} = \indicatorset{\binary{k}{a} = v}$.
Tests of the third type are designed to allow us to do a
(comparatively slow) sequential search
on the index of the realized class.
Specifically, we have tests $\testlin_k$ for all $1 \le k \le \nclass$, such
that 
$\hvar = \hypothesis_{a,v} \implies X_{\testlin_k} = \indicatorset{a = k}$.
Finally, tests of the fourth type, $\set{\testdum_k : k \in \nats}$ ,
are dummy tests that reveal no information at all.  Formally, $X_{\testdum_k}$ always equals
zero.

Given this input, suppose $\hvar = \hypothesis_{a,v}$.
One solution is to run $\test_0$ to find $v$, then run tests $\test_1, \ldots, \test_q$ to determine 
$\binary{k}{a}$ for all $1 \le k \le q$ and hence to determine $a$.  This reveals the value of $\hvar$, and hence the class $\hvar$ belongs to.  Since the tests have unit cost, this policy $\policy'$ has cost $\cost(\policy') = q+1$.

Next, fix a posterior--based policy $\policy$ and consider what it will do.  
Call a class \emph{possible} if not all of its hypotheses have been
ruled out by tests performed so far.
Note that all possible classes contain the same number of hypotheses,
because they initially have two, and each test $\test_k$ that can reduce the size of
a possible class to one, will reduce the size of
every possible class to one.  This, and the fact that the prior is
uniform, implies that the posterior equivalence-class distribution is
uniform over the remaining possible classes.
If no tests in $\set{\test_k : 0 \le k \le q}$ have been run, as is
initially the case, any single test in this set will not change the
posterior equivalence-class distribution.
Hence, as measured with respect to $\pot$, such tests are precisely as
good as the dummy tests.  
If these tests are each better than any test
in $\set{\testlin_k : 1 \le k \le \nclass}$, then $\policy$ selects
among $\set{\test_k : 0 \le k \le q} \cup
\set{\testdum_k :  k \in \nats}$ at random.  Since there are infinitely many
dummy tests, with probability one a dummy test is selected.
Since the posterior remains the same, $\policy$ will repeatedly select
a test at random from this set, resulting in an infinite loop as dummy
tests are selected repeatedly \emph{ad infinitum}.
Otherwise, some test $\testlin_k$ is
preferable to the other tests, measured  with respect to $\pot$.
In the likely event that $t$ is not the index of $\hvar$, 
we are left with a
residual problem in which tests in $\set{\test_k : 0 \le k \le q}$
still have no effect on the posterior, there is one
less class, and the prior is again uniform. 
Hence our previous argument still applies, and $\policy$ will
either enter an infinite loop or will repeatedly select tests in $\set{\testlin_k : 1 \le k \le \nclass}$ until
a test has an outcome of $1$.  Thus in expectation $\policy$ costs at least 
$\cost(\policy) \ge \frac{1}{\nclass}\sum_{z=1}^{\nclass} z  =
(\nclass+1)/2$.
Since $\nclass = 2^{q}$, $n =
2 \nclass$, and $\cost(\policy^*) \le \cost(\policy') = q+1 = \log_2(n)$ we infer 
$$\cost(\policy) \ge \frac{\nclass}{2} = \paren{\frac{n}{4 \log_2
    (n)}} \cost(\policy^*) $$
which completes the proof.
\end{proof}

} { }

\end{document}